\documentclass[twoside,11pt]{article}

% Any additional packages needed should be included after jmlr2e.
% Note that jmlr2e.sty includes epsfig, amssymb, natbib and graphicx,
% and defines many common macros, such as 'proof' and 'example'.
%
% It also sets the bibliographystyle to plainnat; for more information on
% natbib citation styles, see the natbib documentation, a copy of which
% is archived at http://www.jmlr.org/format/natbib.pdf
\usepackage{arxiv,times}
\usepackage{hyperref}
\usepackage{url}
\iclrfinalcopy

\bibpunct{(}{)}{;}{a}{,}{,}

\usepackage[utf8]{inputenc} % allow utf-8 input
\usepackage[T1]{fontenc}    % use 8-bit T1 fonts
\usepackage{booktabs}       % professional-quality tables
\usepackage{nicefrac}       % compact symbols for 1/2, etc.
\usepackage{microtype}      % microtypography
\usepackage{mathtools}

\usepackage{tabularx,ragged2e,caption}
\newcolumntype{C}[1]{>{\Centering}m{#1}}

%%%%%%%%%%%%%%
% MATH
%%%%%%%%%%%%%%%
\usepackage{listings}
\usepackage{amsthm}
\usepackage{amssymb}
\usepackage{framed} 
\usepackage{amsmath}
\usepackage{amssymb}
\usepackage{relsize}
\usepackage{tikz}
\usepackage{xr}
% use Times
\usepackage{times}
% For figures
\usepackage{graphicx} % more modern
\usepackage{subfigure} 
\usepackage{wrapfig}
% For algorithms
\usepackage{algorithm}
\usepackage{algorithmic}
\usepackage{tikz-cd}

  % end of proof 

\newtheorem{theorem}{Theorem}

\newtheorem{lemma}[theorem]{Lemma}

\newtheorem{definition}[theorem]{Definition}

\numberwithin{theorem}{section}
\numberwithin{equation}{section}

\newcommand{\rotB}{\scalebox{-1}[1]{B}}

\usetikzlibrary{backgrounds}
\usetikzlibrary{calc}
\usepackage{soul}
\usepackage{stackrel}

\usepackage{relsize}

\tikzset{fontscale/.style = {font=\relsize{#1}}
    }

\def\reals{{\mathbb R}}

\def\t2{\tfrac12}

\def\scriptf{{\mathcal F}}

\def\scriptc{{\mathcal C}}

\def\scriptw{{\mathcal W}}

\def\scripta{{\mathcal A}}

\def\scripto{{\mathcal O}}

\def\scriptd{{\mathcal D}}

\def\scriptn{{\mathcal N}}
\def\scriptp{{\mathcal P}}

\def\scripts{{\mathcal S}}

% Definitions of handy macros can go here

% Heading arguments are {volume}{year}{pages}{submitted}{published}{author-full-names}

% Short headings should be running head and authors last names

\title{Deep Function Machines: Generalized Neural Networks for Topological
Layer Expression}

\author{William H.~Guss \\
  Machine Learning at Berkeley \\
  University of California, Berkeley \\
  \texttt{wguss@ml.berkeley.edu}
}

\begin{document}

\maketitle

\begin{abstract}%   <- trailing '%' for backward compatibility of .sty file

% Why do convolutions work so well on image data? Deep learning theory often studies how particular architectures give rise to their success in practice. In this paper we take the dual perspective and propose a framework

% In this paper, we propose deep function machines (DFMs), a generalization of neural networks for designing new layer types subject to \emph{topological} properties of data. DFMs provide a natural extension of neural networks to arbitrary dimension, and we show that the language of DFMs unify past analysis of infinite dimensional neural networks 
In this paper we propose a generalization of deep neural networks called deep function machines (DFMs). DFMs act on vector spaces of arbitrary (possibly infinite) dimension and we show that a family of DFMs are invariant to the dimension of input data; that is, the parameterization of the model does not directly hinge on the quality of the input  (eg. high resolution images). Using this generalization we provide a new theory of universal approximation of bounded non-linear operators between function spaces. We then suggest that DFMs provide an expressive framework for designing new neural network layer types with topological considerations in mind. Finally, we introduce a novel architecture, RippLeNet, for resolution invariant computer vision, which empirically achieves state of the art invariance.
\end{abstract}

%!TEX root = ../main.tex

\section{Introduction}
In recent years, deep learning has radically transformed a majority of approaches to computer vision, reinforcement learning, and generative models [\cite{schmidhuber2015deep}]. Theoretically, we still lack a unified description of what computational mechanisms have made these deeper models more successful than their wider counterparts. Substantial analysis by \cite{shalev2011learning}, \cite{raghu2016expressive}, \cite{poole2016exponential} and many others gives insight into how the \emph{properties  of neural architectures}, like depth and weight sharing, determine the expressivity of those architectures. However, less studied is the how the \emph{properties of data}, such as sample statistics or geometric structure, determine the architectures which are most expressive on that data.

Surprisingly, the latter perspective leads to simple questions without answers rooted in theory. For example, what topological properties of images allow convolutional layers such expressivity and generalizeability thereon? Intuitively, spatial locality and translation invariance are sufficient justifications in practice, but is there a more general theory which suggests the optimality of convolutions? Furthermore, do there exist weight sharing schemes beyond convolutions and fully connected layers that give rise to provably more expressive models in practice?  In this paper, we will more concretely study the data-architecture relationship and  develop a theoretical framework for creating layers and architectures with provable properties subject to topological and geometric constraints imposed on the data.

% It is natural to wonder how the structure or topology of data might define a neural architecture which best expresses functions on that data. In practice, ResNet and Inception-v$N$, are examples of novel network topologies, that go far beyond the simple regime of depth in order to achieve state-of-the-art performance. Furthermore, what structures beyond convolution might give rise to provably more expressive models in practice? Although the computational skeleton framework\cite{daniely2016toward} touches on these questions, we are concerned directly with the nature of the computation done at each node or layer in these architectures. 

\textbf{The Problem with Resolution.} To motivate a use for such a framework, we consider the problem of learning on high resolution data. Computationally, machine learning deals with discrete signals, but frequently those signals are sampled in time from a continuous function. For example, audio is inherently a continuous function $f: [0, t_{end}] \to \mathbb{R}$, but is sampled as a vector $v \in \mathbb{R}^{44,100\times t}.$ Even in vision, images are generally piecewise smooth functions $f: \mathbb{R}^2 \to \mathbb{R}^3$ mapping pixel position to color intensity, but are sampled as tensors $v \in \mathbb{R}^{x\times y\times c}.$ Performing tractible machine learning as the resolution of images or audio increases almost always requires some lossy preprocessing like PCA or Discrete Fourier Analysis [\cite{mlsurvey}]. Convolutional neural networks avoid dealing therein by intutively assuming a spacial locality on these vectors. However, one wonders what is lost through the use of various dimensionality reduction and weight sharing schemes\footnote{Note we do not claim that deep learning on high resolution data is currently intractible or ineffective. The problem of resolution is presented as an example in which topological constaints can be imposed on a type of data to yield new architecutres with desired, provable properties.}.
\begingroup
\setlength{\belowcaptionskip}{-20pt}
\begin{figure}
	\begin{center}
		\includegraphics[width=0.27\textwidth]{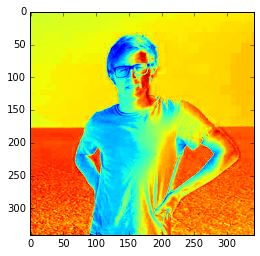} \includegraphics[width=0.40\textwidth]{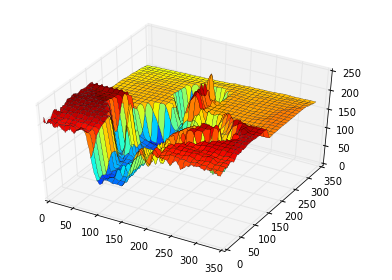}
		\caption{Left: A discrete vector $v \in \mathbb{R}^{l\times w}$ representation of an image. Right: The true continuous function $f: \mathbb{R}^2 \to \mathbb{R}$ from which it was sampled.}
	\end{center}
\end{figure}
\endgroup

A key observation in discussing a large class of smooth functions is their simplicity. Although from a set theoretic perspective, the graph of a function consists of infiniteley many points, relatively complex algebras of functions can be described with symbolic simplicity. A great example are polynomials: the space of all square ($x^2$) mononomials occupies a one-dimensional vector space, and one can generalize this phenomena beyond these basic families. Thus we will explore what results in embracing the assumption that a signal is really a sample from a continuous process, and utilize the analytic simplicity of certain smooth functions to derive new layer types.

% In light of our observations we will show that the more general assumption of continuity gives rise to convolutional layers and other more expressive restrictions on the topologies of neural networks.\cite{DBLP:journals/corr/ZhaiCLZ16a}.

% In this work, we explore what results in embracing the assumption that a signal is really a sample from a continuous process.

% In this work we will build a general framework for tackling this toy problem and others, by building a general theory from which networks can be derived with probvable proeptyies.

 \textbf{Our Contribution. }  First, we extend neural networks to the  infinite dimensional domain of continuous functions and define \emph{deep function machines} (DFMs), a general family of function approximators which encapsulates this continuous relaxation and its discrete counterpart. Thereafer, we survey and refocus past analysis of neural networks with infinitely (and potentially uncountably) many nodes\footnote{See related work.}  with respect to the expresiveness the maps that they represent. We  show that DFMs not only admit most other infinite dimensional neural network generalizations in the literature but also provide the necessary language to solve two long standing questions of universal approximation raised following \cite{stinchcombe1999neural}. With the framework firmly established, we then return to our motivating goal of provable deep learning and show that DFMs naturally give rise to neural networks which are provably invariant to the resolution of the input, and indeed that DFMs can be used more generally to construct architectures (e.g. those with convolutions) with provable properties given topological assumptions. Finally we experimentally verify such constructions by introducing a new type of layer, WaveLayers, apart from convolutions. 

%!TEX root = ../main.tex
\section{Background}
In order to propose deep function machines we must establish what it means for a neural network act directly on continuous functions. Recall the standard\cite{mcculloch} feed-forward neural network.
    \begin{definition}[Discrete Neural Networks]\label{def:discretenn}
    
    We say $\mathcal{N}: \mathbb{R}^n \to \mathbb{R}^m$ is a (discrete) feed-forward neural network iff for the following recurrence relation is defined for adjacent layers $\ell \to \ell'$,
    \begin{equation}
        \begin{aligned}
        \mathcal{N}:\  & y^{\ell'}= g\left(W_\ell^T y^{\ell}\right);\;\; y_{0} := x
        \end{aligned}
    \end{equation}
    where $W_\ell$ is a weight tensor and $g$ is a non-polynomial activation function. 
    \end{definition}

Suppose that we wish to map one space of functions to another with a neural network. Consider the model of $\scriptn$ as the number of neurons for every layer becomes uncountable. The index for each neuron then becomes real-valued, along with the weight and input vectors. The process is roughly depicted in Figure \ref{fig:func_to}.  The core idea behind the derivation is that as the number of nodes in the network becomes uncountable we need apply a normalizing term to the contribution of each node in the evaluation of the following layer so as to avoid saturation. Eventually this process resembles Lebesgue integration.

More formally, let $\scriptn$ be an $L$ layer neural network as given in Definition \ref{def:discretenn}. Without loss of generality we will examine the first layer, $\ell = 1$. Let us denote \(\xi: X \subset \mathbb{R}\to\mathbb{R}\) as some arbitrary continuous input function for the neural network. Likewise consider a real-valued piecewise integrable \emph{weight function}, \(w_\ell: \mathbb{R}^{2}\to\mathbb{R}\), for a layer $\ell$ which is composed of two indexing variables\footnote{It is no loss of generality to extend the results in this work to weight kernels indexed by arbitrary $u, v \in \mathbb{R}^n$, but we ommit this treatment for ease of understanding.} $u,v \in E_\ell, E_{\ell'}\subset \mathbb{R}$. In this analysis we will restrict the indices to lie in compact sets $E_\ell, E_{\ell'}$. 

\begingroup
\setlength{\belowcaptionskip}{-10pt}
\begin{figure}
    \begin{center}
 \includegraphics[width=0.35\textwidth]{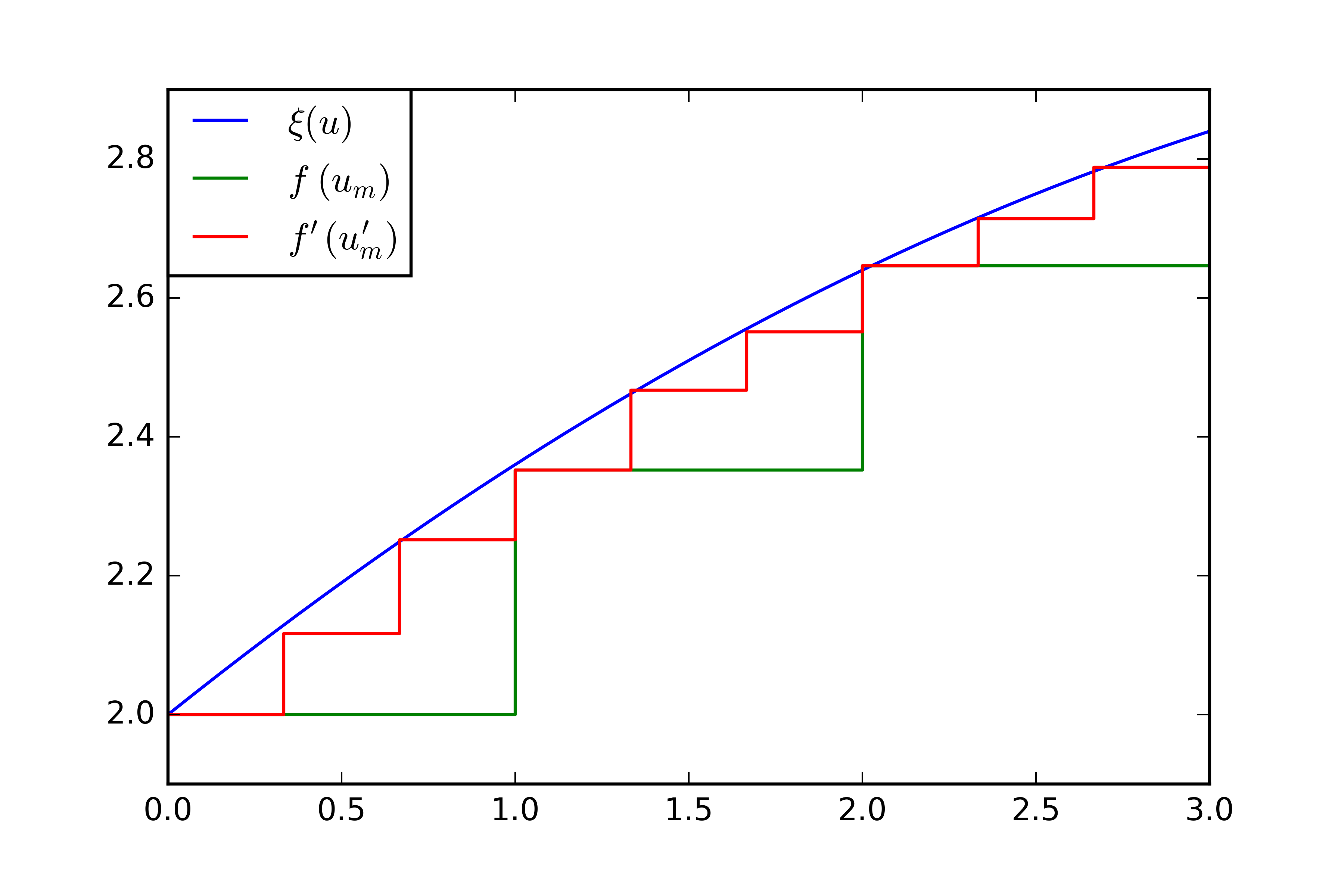}
    \includegraphics[width=0.45\textwidth]{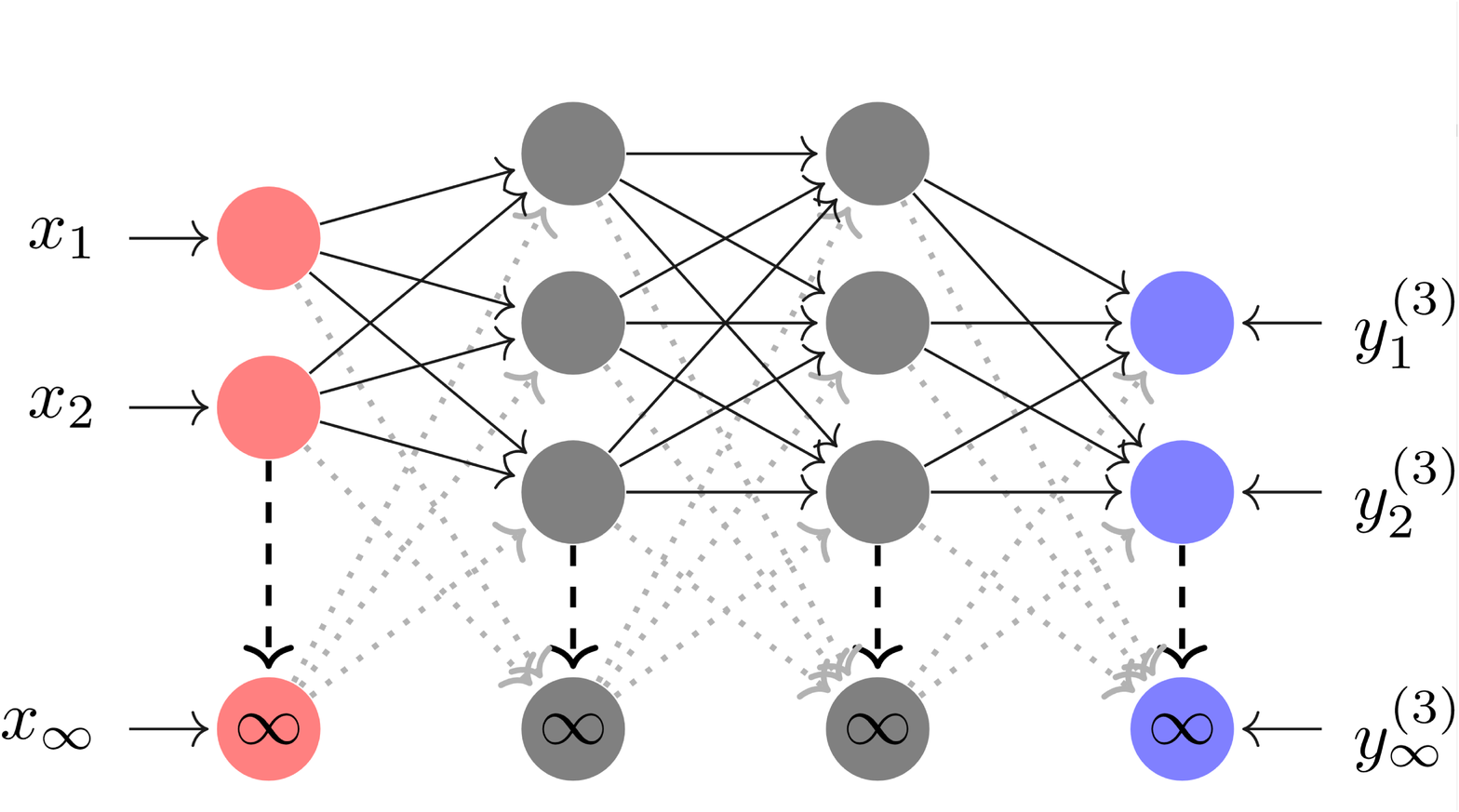}
    \end{center}
    \caption{Left: Resolution refinement of an input signal by simple functions. Right: An illustration of the extension of neural networks to infinite dimensions. Note that $x \in \mathbb{R}^N$ is a sample of $f^{(N)}$, a simple function with $\|f^{(N)} - \xi\| \to 0$ as $N \to \infty.$ Furthermore, the process is not actually countable, as depicted here.}\label{fig:func_to}
 \end{figure}
 \endgroup
 
If $f$ is a simple function then for some finite partition of $E_\ell$, say $u_0 < \cdots < u_n$, then $f = \sum_{m=1}^n \chi_{[u_{m-1},u_{m}]} p_n$ where for all $u \in [u_{m-1},u_{m}]$, $p_n \leq \xi(u).$ Visually this is a piecewise constant function underneath the graph of $\xi$. Suppose that some vector $x$ is sampled from $\xi$, then we can make $x$ a simple function by taking an arbitray parition of $E_{\ell}$ so that: when $u_0 < u < u_1$, $f(u) = x_0$, and when $u_1 < u < u_2$, $f(u) = x_1$, and so on. This simple function $f$ is essentially piecewise constant on intervals of uniform length so that on each interval it attains the value of the $n$th component, $x_n$. Finally if $w_v$ is some simple function approximating the $v$-th row of some weight matrix $W_\ell$ in the same fashion, then $w_v \cdot f$ is also a simple function. Therefore particular neural layer associated to $f$ (and thereby $x$) is
\begin{equation}
y^1 = g(W^T_\ell x) = g\left(\sum_{m=1}^n W^{\ell}_{mv}x_m \mu([u_{m-1},u_{m}]) \right) = g\left(\int_{E_\ell} w^{\ell}_v(u)f(u)\ d\mu(u)\right),
\end{equation}
where $\mu$ is the Lebesgue measure on $\mathbb{R}.$

Now suppose that there is a refinement of $x$; that is, returning to our original problem, there is a higher resolution sample of $\xi$ say $f'$ (and thereby $x'$), so that it more closely approximates $\xi$. It then follows that the cooresponding refined partition, $u_0' < \cdots < u_k',$ (where $k  > n$), occupies the same $E_\ell$ but individually, $\mu([u_{m-1},u_{m}]) \leq \mu([u'_{m-1},u'_{m}]).$ Therefore we weight the contribution of each $x'_n$ less than each $x_n$, in a measure theoretic sense.

 Recalling the theory of simple functions without loss of generality assume $\xi, \omega(\cdot, \cdot) \geq 0$. Then we yield that if 
\begin{equation}
    F_v = \{(w_v,f): E_\ell \to \mathbb{R}\ |\ f, w_v\text{ simple},\ 0 \leq f \leq \xi, 0 \leq w_v \leq \omega_\ell(\cdot, v) \}
\end{equation}
then it follows immediately that
\begin{equation}
    \sup_{(f, w_v) \in F_v} \int_{E_\ell} w_v(u) f(u)\ d\mu(u) = \int_{E_\ell} \omega_\ell(u,v) \xi(u)\ d\mu(u).
\end{equation}
Therefore we give the following definition for infinite dimensional neural networks.

\begin{definition}[Operator Neural Networks]
We call $\mathcal{O}: L^1(E_{\ell}) \to L^1(E_{\ell'})$ an operator neural network parameterized by $\omega_\ell$ if for two adjacent layers $\ell \to \ell'$
\begin{equation}
          \begin{alignedat}{2}
        \mathcal{O}:\ &y^{\ell'}(v) & &=  g\left(\int_{E_\ell} y^{\ell}(u) \omega_{\ell}(u,v)\ d\mu(u) \right);\;\;
        &y^{0}(v) = \xi(v). 
        \end{alignedat}
\end{equation}
where $E_{\ell}, E_{\ell'}$ are locally compact Hausdorff mesure spaces and $u \in X, v \in Y.$
\end{definition}

%!TEX root = ../main.tex
\section{Deep Function Machines}
With operator neural networks defined, we endeavour to define a topologically inspired framework for developing expressive layer types. A powerful language of abstraction for describing feed-forward (and potentially recurrent) neural network architectures is that of computational skeletons as introduced in \cite{daniely2016toward}. Recall the following definition.
\begin{definition} A computational skeleton $\scripts$ is a directed asyclic graph whose non-input nodes are labeled by activations.
\end{definition}
\cite{daniely2016toward} provides an excellent account of how these graph structures abstract the many neural network architectures we see in practice. We will give these skeletons "flesh and skin" so to speak, and in doing so pursure a suitable generalization of neural networks which allows intermediate mappings between possibly infinite dimensional topological vector spaces. DFMs are that generalization.

\begin{definition}[Deep Function Machines] A deep function machine $\scriptd$ is a computational skeleton $\scripts$ indexed by $I$ with the following properties:
\begin{itemize}
    \item Every vertex in $\scripts$ is a topological vector space $X_\ell$ where $\ell \in I.$
    \item If nodes $\ell \in A \subset I$ feed into  $\ell'$ then the activation on $\ell'$ is denoted $y^\ell \in X_\ell$ and is defined as
    \begin{equation}
         y^{\ell'} =  g\left(\sum_{\ell \in A}T_\ell\left[y^\ell \right]\right) 
    \end{equation}
    where $T_\ell: X_\ell \to X_{\ell'}$ is some affine form  called the operation of node $\ell$.
\end{itemize}
\end{definition}
% If $\ell \in I$ indexes an input node to $\scripts$, then we denote $y_\ell = \xi_\ell.$

To see the expressive power of this generalization, we propose several operations $T_\ell$ that not only encapsulate ONNs and other abstractions on infinite dimensional neural networks, but also almost all feed-forward architectures used in practice.

\subsection{Generalized Neural Layers}

Generalized neural layers are the basic units of the theory of deep function machines, and they can be used to construct architectures of neural networks with provable properties, such as the resolution invariance we seek. The most basic case is $X_\ell = \mathbb{R}^n$ and $X_{\ell'} = \mathbb{R}^m$, where we should expect a standard neural network. As either $X_\ell$ or $X_{\ell'}$ become infinite dimensional we hope to attain models of functional MLPs from \cite{rossi2002theoretical} or infinite layer neural networks from \cite{globerson2016learning} with universal approximation properties.

\begin{definition}[Generalized Layer Operations]
We suggest several natural generalized layer families $T_\ell$ for DFMs as follows.
  \begin{itemize}
  \item $T_\ell$ is said to be $\mathfrak{o}$-operational if and only if $X_\ell$ and $X_{\ell'}$ are spaces of integrable functions over locally compact Hausdorff measure spaces, and 
     \begin{equation} \label{eq:tloperational}
    \begin{aligned} 
      T_\ell[y^\ell](v) = \mathfrak{o}(
      y^\ell)(v) = \int_{E_\ell} y^\ell(u) \omega_\ell(u,v)\ d\mu(u).
    \end{aligned}
    \end{equation}
    For example\footnote{Nothing precludes the definition from allowing multiple functions as input, the operation must just be carried on each coordinate function.}, $X_\ell, X_{\ell'} = C(\mathbb{R})$, yields operator neural networks.

  \item $T_\ell$ is said to be $\mathfrak{n}$-discrete if and only if $X_\ell$ and $X_{\ell'}$ are finite dimensional vector spaces, and
    \begin{equation} \label{eq:tldiscrete}
    \begin{aligned} 
      T_\ell[ y^\ell] = \mathfrak{n}(
     y^\ell) = W_\ell^T y^\ell.
    \end{aligned}
    \end{equation}
    For example, $X_\ell = \mathbb{R}^n$, $X_{\ell'} = \mathbb{R}^m,$ yields standard feed-forward neural networks.

    \item $T_\ell$ is said to be $\mathfrak{f}$-functional if and only if $X_\ell$  is some space of integrable functions as mentioned previously and $X_{\ell'}$ is a finite dimensional vector space, and
    \begin{equation} \label{eq:tlfunctional}
    \begin{aligned}   
      T_\ell[y^\ell] = \mathfrak{f}(y^\ell) = 
        \int_{E_\ell} \omega_\ell(u)  y^{\ell}(u)\ d\mu(u)
    \end{aligned}
    \end{equation}
    For example
    \footnote{Note that $y^\ell(u)$ is a scalar function and $\omega$ is a vector valuued function of dimension $dim(X_{\ell'}).$ Additionally this definiton can easily be extended to function spaces on finite dimensional vectorspaces by using the Kronecker product.}
    $X_\ell = C(\reals)$, $X_{\ell'} =\mathbb{R}^n,$ yields functional MLPs.

  \item $T_\ell$ is said to be $\mathfrak{d}$-defunctional if and only if $X_\ell$ is a  are finite dimensional vector space and $X_{\ell'}$ is some space of integrable functions.
    \begin{equation} \label{eq:tldefunctional}
    \begin{aligned} 
      T_l[y^\ell](v) = \mathfrak{d}(y^\ell)(v) = \omega_\ell(v)^Ty^\ell
    \end{aligned}
    \end{equation}
    For example, $X_\ell = \mathbb{R}^n$, $X_{\ell'} = C(\mathbb{R})$.
  \end{itemize}
\end{definition}
 The naturality of the above layer operations come from their universality and generality.

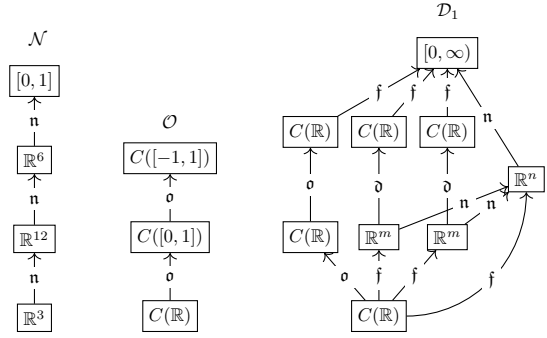
\begin{figure}
    \begin{center}
      %!TEX root = ../main.tex

\begin{tikzpicture}[scale=0.7,  every node/.style={scale=0.7} ]

\node[draw] (l0) at (0,0) {$\mathbb{R}^3$};
\node[draw] (l1) at (0,1.5) {$\mathbb{R}^{12}$};
\node[draw] (l2) at (0,3) {$\mathbb{R}^{6}$};
\node[draw] (l3) at (0,4.5) {$[0,1]$};
\draw [->] (l0) -- (l1) node [midway, fill=white] {$\mathfrak{n}$};
\draw [->] (l1) -- (l2) node [midway, fill=white] {$\mathfrak{n}$};
\draw [->] (l2) -- (l3) node [midway, fill=white] {$\mathfrak{n}$};

\node[text width=1cm] at (0.4, 5.3) {$\scriptn$};
\end{tikzpicture}
$\;\;\;$
\begin{tikzpicture}[scale=0.7,  every node/.style={scale=0.7}]

\node[draw] (l0) at (0,0) {$C(\mathbb{R})$};
\node[draw] (l1) at (0,1.5) {$C([0,1])$};
\node[draw] (l2) at (0,3) {$C([-1,1])$};
\draw [->] (l0) -- (l1) node [midway, fill=white] {$\mathfrak{o}$};
\draw [->] (l1) -- (l2) node [midway, fill=white] {$\mathfrak{o}$};

\node[text width=1.5cm] at (0.6, 3.7) {$\scripto$};
\end{tikzpicture}
$\;\;\;$
\begin{tikzpicture}[scale=0.7,  every node/.style={scale=0.7}]

\node[draw] (l00) at (0,0) {$C(\mathbb{R})$};

\foreach \i in {3}
{
	\node[draw] (l1\i) at (1.3*2-1.3*\i ,1.5) {$C(\mathbb{R})$};
	\draw [->] (l00) -- (l1\i) node [midway, fill=white] {$\mathfrak{o}$};
}
\foreach \i in {1,...,2}
{
	\node[draw] (l1\i) at (1.3*2-1.3*\i ,1.5) {$\mathbb{R}^m$};
	\draw [->] (l00) -- (l1\i) node [midway, fill=white] {$\mathfrak{f}$};
}

\node[draw] (l20) at (1.4*2, 2.6) {$\mathbb{R}^n$};
\foreach \i in {1,...,2}
{
	\draw [->] (l1\i) -- (l20)  node [pos=0.6, fill=white] {$\mathfrak{n}$};
}

\foreach \i in {1,...,2}
{
	\node[draw] (l3\i) at (1.3*2-1.3*\i ,3.5) {$C(\mathbb{R})$};
	\draw [->] (l1\i) -- (l3\i) node [midway, fill=white] {$\mathfrak{d}$};
}

\foreach \i in {3}
{
	\node[draw] (l3\i) at (1.3*2-1.3*\i ,3.5) {$C(\mathbb{R})$};
	\draw [->] (l1\i) -- (l3\i) node [midway, fill=white] {$\mathfrak{o}$};
}
\node[draw] (l40) at (1.3*2-1.3*1 ,5) {$[0,\infty)$};
\foreach \i in {1,2,3}
{
	
	\draw [->] (l3\i) -- (l40) node [midway, fill=white] {$\mathfrak{f}$};
}
\draw [->] (l20) -- (l40) node [midway, fill=white] {$\mathfrak{n}$};
\draw [->] (l00) to [out=0,in=270]node [midway, fill=white] {$\mathfrak{f}$} (l20);

\node[text width=1cm] at (1.3*2-1.3*1 + 0.3, 5.8) {$\scriptd_1$};
\end{tikzpicture}
    \end{center}
    \caption{Examples of three different deep function machines with activations ommited and $T_\ell$ replaced with the actual type. Left: A standard feed forward binary classifier (without convolution), Middle: An operator neural network. Right: A complicated DFM with residues. }\label{fig:dfm_ex}
 \end{figure}

% Although there are many more interesting instantiations of DFMs as depicted in Figure \ref{fig:dfm_ex}, a discussion thereof is mute without a rigorous foundation. Therefore for the sake of posterity, we establish a number of universality results for DFMs containing these generalized layer configurations.

\subsection{Related Work and a Unified View of Infinite Dimensional Neural Networks}

Operator neural networks are just one of many instantiations of DFMs. Before we show universality results for deep function machines, it should be noted that there has been substantial effort in the literature to explore various embodiments of infinite dimensional neural networks.  To the best of the authors' knowledge, DFMs provide a single unified view of every such proposed framework to date.

In particular, \cite{neal} proposed the first analysis of neural networks with countably infinite nodes, showing that as the number of nodes in discrete neural networks tends to infinity, they converge to a Gaussian process prior over functions. Later, \cite{williams1998computation} provided a deeper analysis of such a limit on neural networks. A great deal of effort was placed on analyzing covariance maps associated to the Guassian processes resultant from infinite neural networks with both sigmoidal and Gaussian activation functions. These results were based mostly in the framework of Bayesian learning, and led to a great deal of analyses of the relationship between non-parametric kernel methods and infinite networks, including \cite{le2007continuous}, \cite{seeger2004gaussian}, \cite{cho2011analysis}, \cite{hazan2015steps}, and \cite{globerson2016learning}.

Out of this initial work, \cite{hazan2015steps} define hidden layer \emph{infinite layer neural networks} with one or two layers which map a vector  $x \in \mathbb{R}^n$ to a real value by considering infinitely many feature maps $\phi_w(x) = g\left(\langle w, x\rangle\right)$ where $w$ is an index variable in $\mathbb{R}^n.$ Then for some weight function $u: \mathbb{R}^n \to \mathbb{R},$ the output of an infinite layer neural network is a real number $\int u(w)\ \phi_w(x)\ d\mu(w)$. This approach  can be kernelized and therefore has resulted
 further theory by \cite{globerson2016learning} that aligns neural networks with Gaussian processes and kernel methods. Operatator neural networks differ significantly in that we let each $w$ be freely paramaterized by some function $\omega$ and require that $x$ be a continuous function on a locally compact Hausdorf space. Additionally no universal approximation theory is provided for infinite layer networks directly, but is cited as following from the work of \cite{le2007continuous}. As we will see, DFMs will not only encapsulate (and benefit from) these results, but also provide a general universal approximation theory therefor.

Another variant of infinite dimensional neural networks, which we hope to generalize, is the \emph{functional multilayer perceptron} (Functional MLP). This body of work is not referenced in any of the aforementioned work on infinite layer neural networks, but it is clearly related. The fundamental idea is that given some $f \in V = C(X)$, where $X$ is a locally compact Hausdorff space, there exists a generalization of neural networks which approximates arbitrary continuous bounded functionals on $V$ (maps $f \mapsto a \in \mathbb{R}$). These functional MLPs take the form $\sum_{i=1}^p \beta_i g\left(\int \omega_i(x) f(x)\ d\mu(x)\right)$. The authors show the power of such an approximation using the functional analysis results of \cite{stinchcombe1999neural} and additionally provide statistical consistency results defining well defined optimal parameter estimation in the infinite dimensional case. 

Stemming additionally from the initial work of \cite{neal}, the final variant called \emph{continuous neural networks} has two manifestations: the first of which is more closely related to functional perceptrons and the last of which is exactly the formulation of infintie layer NNs. Initially \cite{le2007continuous}
proposes an infinite dimensional neural network of the form $\int \omega_1(u) g(x \cdot \omega_0(u))\ d\mu(u)$ and shows universal approximation in this regime. Overall this formulation mimics multiplication by some weighting vector as in infinite layer NNs, except in the continuous neural formulation $\omega_0$ can be parameterized by a set of weights. Thereafter, to prove connections between gaussian processes from a different vantage, they propose \emph{non-parametric continuous neural networks}, $\int \omega_1(u) g(x \cdot u)\ d\mu(u)$, which are exactly infinite-layer neural networks.

In the view of deep function machines, the foregoing variants of infinite and semi-infinite dimensional neural networks are merely instantiations of different {computational skeleton} structures. A summary of the unified view is given in Table \ref{tab:title}.

\begin{figure}
\centering

\captionof{table}{Unification of Infinite Dimensional Neural Network Theory} \label{tab:title} 
\begin{tabular}{ C{1.7cm} C{3.4cm} C{4.75cm} C{2.6cm} }\toprule[1.5pt]
\bf Name & \bf  Form & \bf DFM & \bf Authors \\\midrule
Infinite NNs  &   $b + \sum_{j=1}^\infty v_j h(x; u_j)$    & $\scriptn_\infty: \boxed{\mathbb{R}^n} \xrightarrow{\mathfrak{n}} \boxed{\oplus_{i=1}^\infty \mathbb{R}} \xrightarrow{\mathfrak{n}} \boxed{\mathbb{R}^m}$      &  \cite{neal, williams1998computation}    \\[0.1cm]

Functional MLPs & $\sum_{i=1}^p \beta_i g\left(\int w_i \xi\;d\mu\right)$ & $\scriptf: \boxed{L^1(\mathbb{R})} \xrightarrow{\mathfrak{f}} \boxed{\mathbb{R}^p} \xrightarrow{\mathfrak{n}} \boxed{\mathbb{R}}$ & \cite{stinchcombe1999neural, rossi2002theoretical}\\[0.2cm]

Continuous NNs & $\int \omega_1(u) g(x \cdot \omega_0(u))\ du$ &$\scriptc:  \boxed{\mathbb{R}^n} \xrightarrow{\mathfrak{d}} \boxed{L^1({[a,b]})} \xrightarrow{\mathfrak{f}} \boxed{\mathbb{R}^m}$  & \cite{le2007continuous}\\[0.2cm]

Non-parametric Continuous NNs & $\int \omega_1(u) g(\langle x, u \rangle)\ du$ &  $\scriptc':  \boxed{\mathbb{R}^n} \xrightarrow{\mathfrak{d}} \boxed{L^1(\mathbb{R})} \xrightarrow{\mathfrak{f}} \boxed{\mathbb{R}^m}$  & \cite{le2007continuous}\\[0.2cm]

Infinite Layer NNs & same as non-parametric continuous NNs &  same as non-parametric continuous NNs&
\cite{globerson2016learning,hazan2015steps} \\[0.1cm]

\bottomrule[1.25pt]
\end {tabular}\par
\end{figure}

\subsection{Approximation Theory of Deep Function Machines}
In addition to unification, DFMs provide a powerful language for proving universal approximation theorems for neural networks of any depth and dimension\footnote{By dimension, we mean both infinite and finite dimensional neural networks.}. The central theme of our approach is that \emph{the approximation theories of any DFM can be factored through the standard approximation theories of discrete neural networks}. In the forthcoming section, this principle allows us to prove two approximation theories which have been open questions since \cite{stinchcombe1999neural}.

 The classic results of \cite{univapprox}, yields the theory for $\mathfrak{n}$-discrete layers. For $\mathfrak{f}$-functional layers, the work of \cite{stinchcombe1999neural} proved in great generality that for certain topologies on $C(E_\ell)$, two layer functional MLPs universally approximate any continuous functional on $C(E_\ell)$. Following \cite{stinchcombe1999neural}, \cite{rossi2002theoretical} extended these results to the case wherein multiple $\mathfrak{o}$-operational layers prepended $\mathfrak{f}$-functional layers. We will show in particular that $\mathfrak{o}$-operational and similarly $\mathfrak{d}$-defunctional layers alone are dense in the much richer space of uniformly continuous bounded operators on function space. We give three results of increasing power, but decreasing transparency.

\begin{theorem}[Point Approximation]\label{thm:p_approx}
Let $[a,b] \subset \mathbb{R}$ be a bounded interval and $g: \mathbb{R} \to B \subset \mathbb{R}$ be a continuous, bijective activation function. Then if $\xi: E_\ell \to \mathbb{R}$  and $ f: E_{\ell}' \to B$ are $L^1(\mu)$ integrable functions there exists a unique class of $\mathfrak{o}$-operational layers such that $g \circ \mathfrak{o}[\xi] = f.$
\end{theorem}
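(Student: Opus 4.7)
The approach is fully constructive and factors the problem through the bijectivity of $g$. Since $g: \mathbb{R} \to B$ is a continuous bijection and $f$ takes values in $B$, the composition $h := g^{-1} \circ f$ is a well-defined function $E_{\ell'} \to \mathbb{R}$ and is measurable (continuity of $g^{-1}$ follows from bijectivity plus monotonicity implicit in continuous invertibility on an interval). The equation $g \circ \mathfrak{o}[\xi] = f$ is then equivalent to finding a kernel $\omega_\ell$ satisfying
\begin{equation*}
\int_{E_\ell} \xi(u)\,\omega_\ell(u,v)\, d\mu(u) = h(v) \qquad \text{for every } v \in E_{\ell'}.
\end{equation*}
This decouples into a family of scalar linear constraints indexed by $v$, each imposing only one condition on the slice $\omega_\ell(\cdot, v)$, so the problem is massively underdetermined and solvable by an explicit choice.

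Assuming the nondegenerate case $\|\xi\|_{L^1} > 0$ (otherwise $\mathfrak{o}[\xi] \equiv 0$ forces $f \equiv g(0)$, and any $\omega_\ell$ works trivially), I would pick the canonical template $\phi(u) := \mathrm{sign}(\xi(u))$, which is bounded and measurable and satisfies $\int_{E_\ell}\xi(u)\phi(u)\,d\mu(u) = \|\xi\|_{L^1} \neq 0$. Then define
\begin{equation*}
\omega_\ell^*(u,v) := \frac{h(v)\,\phi(u)}{\|\xi\|_{L^1}}.
\end{equation*}
A direct substitution gives $\mathfrak{o}[\xi](v) = h(v)$, hence $g(\mathfrak{o}[\xi](v)) = f(v)$ pointwise. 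Piecewise integrability of $\omega_\ell^*$ in both variables is inherited from the boundedness of $\phi$ and the measurability of $h$.

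For the uniqueness-of-class statement, I would characterize the full solution set. If $\omega_\ell$ and $\omega_\ell'$ both satisfy the displayed integral equation, then their difference $\psi := \omega_\ell - \omega_\ell'$ lies in the annihilator
\begin{equation*}
N_\xi := \Bigl\{\psi : E_\ell \times E_{\ell'} \to \mathbb{R} \ \Big|\ \int_{E_\ell}\xi(u)\,\psi(u,v)\, d\mu(u) = 0 \text{ for all } v \in E_{\ell'}\Bigr\}.
\end{equation*}
Thus the admissible layers form the affine class $\omega_\ell^* + N_\xi$, which is uniquely determined by $(\xi, f, g)$; this is the \emph{unique class} referred to in the statement. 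The main technical obstacle is not the construction itself but the regularity bookkeeping: verifying joint measurability of $\omega_\ell^*$ in $(u,v)$, checking that pointwise equality can be upgraded to the a.e.\ equivalence implicit in the $L^1$ framework, and handling the degenerate branch cleanly so the "unique class" language is literally correct rather than only generically correct.
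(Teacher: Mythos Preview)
Your argument is correct and takes a genuinely different route from the paper. You reduce to the linear constraint $\int_{E_\ell}\xi(u)\,\omega_\ell(u,v)\,d\mu(u)=g^{-1}(f(v))$ and solve it with a \emph{separable} kernel $\omega_\ell^*(u,v)=\phi(u)\,g^{-1}(f(v))/\|\xi\|_{L^1}$, then identify the solution class as the affine coset $\omega_\ell^*+N_\xi$. The paper instead builds the kernel through the fundamental theorem of calculus and the chain rule: it sets $\omega_\ell(u,v)=\big[(g^{-1})'\!\circ h(\Xi(u),v)\big]\,h'(\Xi(u),v)$, where $\Xi$ is an antiderivative of $\xi$ and $h$ is an auxiliary function chosen so that the boundary evaluation $g^{-1}(h(\Xi(u),v))\big|_{u\in\partial E_\ell}$ recovers $g^{-1}(f(v))$. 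The class of admissible layers is then parametrized by the choices of $h$. Your construction is more elementary and demands less regularity (no differentiability of $g^{-1}$ or antiderivative of $\xi$ is needed), and your description of the ``unique class'' as an affine subspace is cleaner and manifestly exhaustive. What the paper's more elaborate construction buys is a \emph{structured} parametric family of kernels in terms of $h$, which the authors explicitly exploit afterward: they conjecture that solving the boundary conditions on $h$ with $\xi,f$ replaced by empirical means gives a statistically good initialization for training. Your separable ansatz proves the theorem more directly but does not surface that parametrization.
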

\begin{proof} We seek a class of weight kernels $\omega_\ell$ so that that $\mathfrak{o}[\xi] = f.$ Let  $\omega_\ell(u,v) = \left[(g^{-1})' \circ\left(h(\Xi(u),v)\right)\right]h'(\Xi(u),v)$ where $\Xi(u)$ is the indefinite integral of $\xi$. Define $h$ so that it satisfies the following two equivalent equations $\mu$-a.e.
\begin{equation}\label{eq:diffinit}
  \begin{aligned}
     h(\Xi(b), v) - h(\Xi(a),v) = f(v) \\
     \frac{\partial h(x,v)}{\partial x}\xi(u)\Big|_{x = \Xi(u), v= v, u\in\{a,b\}}  = 0
  \end{aligned}
\end{equation}
The proof is completed in the appendix.
\end{proof}

The statement of Theorem \ref{thm:p_approx} is not itself very powerful; we merely claim that $\mathfrak{o}$-operational layers can at least map any one function to any other one function. However, the proof yields insight into what the weight kernels of $\mathfrak{o}$-operational layers look like when the single condition $\xi \mapsto f$ is imposed.  Therefrom, we conjecture but do not prove that a statstically optimal initialization for training $\mathfrak{o}$-operational layers is given by satifying \eqref{eq:diffinit} when  
$\xi = \frac{1}{n} \sum_{n=1}^m \xi_n, f = \frac{1}{n} \sum_{n=1}^m f_n$, where the training set $\{(\xi_n, f_n)\}$ are drawn i.i.d from some distribution $D$.

\begin{theorem}[Nonlinear Operator Approximation]\label{thm:o_univ}
  Suppose that $E_1, E_2$ are bounded intervals in $\mathbb{R}$. For all $\kappa, \lambda$,  if $K: \text{Lip}_\lambda(E_1) \to \text{Lip}_\kappa(E_3)$ is a uniformly continuous, nonlinear operator, then for every $\epsilon > 0$ there exists a deep function machine 
  \begin{equation}
  \begin{tikzcd}
    \scriptd: \boxed{L^1(E_1)} \arrow{r}{\mathfrak{o}} &\boxed{L^1(E_{2})} \arrow{r}{\mathfrak{o}}& \boxed{L^1(E_3)}
  \end{tikzcd}
\end{equation}
such that $\left\|\mathcal{ D}|_{\text{Lip}_\lambda} - K\right\| < \epsilon.$
\end{theorem}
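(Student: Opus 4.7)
The plan is to reduce the problem to the one-functional-layer universal approximation results of \cite{stinchcombe1999neural} and its extension by \cite{rossi2002theoretical}, by exhibiting an explicit factorization of a two-layer $\mathfrak{o}$-operational DFM into a functional feature extractor followed by a defunctional reconstruction. First I would invoke Arzelà--Ascoli: since $E_1$ is a bounded interval and functions in $\text{Lip}_\lambda(E_1)$ are uniformly bounded (after a translation, which we may absorb) and equicontinuous, the set $\text{Lip}_\lambda(E_1)$ is a compact subset of $C(E_1) \subset L^1(E_1)$; similarly $\text{Lip}_\kappa(E_3)$ is compact in $C(E_3)$. By uniform continuity, $K$ is then a continuous map between compact sets, and in particular $K(\text{Lip}_\lambda(E_1))$ is totally bounded in $C(E_3)$.

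Next, I would apply the nonlinear operator approximation result of Chen--Chen / Stinchcombe, which (together with Rossi's functional MLP theorem) guarantees that for any $\epsilon > 0$ there exist continuous linear functionals $\phi_1,\dots,\phi_N$ on $C(E_1)$, scalars $\theta_i$, and continuous functions $c_i \in C(E_3)$ such that
\begin{equation}
    \sup_{\xi \in \text{Lip}_\lambda(E_1)}\ \sup_{v \in E_3}\ \Bigl| K(\xi)(v) - \sum_{i=1}^N c_i(v)\, g\!\left(\phi_i(\xi) + \theta_i\right)\Bigr| < \tfrac{\epsilon}{2}.
\end{equation}
By Riesz-type representation (or by direct approximation in $L^1$), each $\phi_i$ may be represented as $\phi_i(\xi) = \int_{E_1} \xi(u)\alpha_i(u)\,d\mu(u)$ for a bounded $\alpha_i$. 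Now I would construct the two $\mathfrak{o}$-operational layers as follows. Pick $N$ disjoint small intervals $J_i \subset E_2$ and approximate Dirac masses $\delta_i$ concentrated on $J_i$. Define the first kernel by $\omega_1(u,u') = \sum_{i=1}^N \alpha_i(u)\,\mathbf{1}_{J_i}(u')/\mu(J_i)$ together with an added bias term producing the $\theta_i$ offsets; then the middle activation $z(u') = g(\int \xi(u)\omega_1(u,u')d\mu(u))$ is (approximately) piecewise constant on the $J_i$'s with value $g(\phi_i(\xi)+\theta_i)$. Choose the second kernel so that $\omega_2(u',v)$ is linear on the support of the second-layer integral, making the outer activation act as identity there (for instance by scaling so the argument lies in a region where $g$ is $\text{id}$ up to an affine transformation that is absorbed into $\omega_2$), with $\omega_2(u',v) = \sum_i c_i(v)\mathbf{1}_{J_i}(u')/\mu(J_i)$, so that $\mathfrak{o}_2[z](v) \approx \sum_i c_i(v) g(\phi_i(\xi)+\theta_i) \approx K(\xi)(v)$. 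Finally control the total error on $\text{Lip}_\lambda$ uniformly in $v$, and convert sup-norm error into the desired operator-norm error on $L^1(E_3)$ using boundedness of $E_3$.

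The main obstacle is the outer nonlinearity $g$ in the second operational layer: unlike the Rossi/Stinchcombe setting where the last layer is linear (or functional without an activation), here the second $\mathfrak{o}$ layer wraps its integral in $g$, and $g$ is not available to be suppressed. My plan to deal with this is the linearization trick sketched above, where we arrange for the argument of the outer $g$ to lie in a neighborhood on which $g$ is essentially affine and invert the affine piece through $\omega_2$; this requires a careful estimate that, since $\text{Lip}_\lambda$ and $\text{Lip}_\kappa$ are compact, all relevant arguments remain uniformly bounded. A more robust alternative, which I would use if $g$ is assumed bijective onto $B$ as in Theorem~\ref{thm:p_approx}, is to approximate $g^{-1} \circ K$ (still continuous between compacta) by the bracketed sum above and then apply $g$ outside, recovering $K$ exactly after composition. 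Either way, the proof reduces to the one-layer functional approximation theorem plus a routine but delicate density argument for the kernel $\omega_2$.
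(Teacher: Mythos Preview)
Your route is genuinely different from the paper's. You lean on the Chen--Chen / Stinchcombe operator approximation theorem to get the representation $\sum_i c_i(v)\,g(\phi_i(\xi)+\theta_i)$ directly, and then encode the $\phi_i$ and $c_i$ into two integral kernels supported on disjoint sub-intervals of $E_2$. The paper instead builds everything from scratch through a \emph{discretization} argument: it introduces affine projection maps $\rho_P,\rho_P^*$ (point-evaluation on a partition and piecewise-linear interpolation back), shows via a Lipschitz estimate that the induced ``affine projection'' $|K|=\rho_Q^*\circ\tilde K\circ\rho_P$ approximates $K$ uniformly (Lemma~\ref{lem:strong_approx}), then approximates the finite-dimensional lattice map $\tilde K:\mathbb{R}^N\to\mathbb{R}^M$ by a classical two-layer discrete network $\scriptn$ via Cybenko, and finally writes down explicit kernels $\omega_1,\omega_2$ (built from Dirac masses at the partition nodes) so that the two-$\mathfrak{o}$-layer DFM equals $\rho_Q^*\circ\scriptn\circ\rho_P$ exactly. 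Your approach is shorter and more conceptual if one is willing to import the operator-approximation literature as a black box; the paper's approach is more self-contained and makes the ``factor through discrete networks'' philosophy explicit, which is the unifying theme it wants to advertise.

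Two remarks on your sketch. First, Arzel\`a--Ascoli does \emph{not} give compactness of $\text{Lip}_\lambda(E_1)$ in $C(E_1)$: the class is equicontinuous but not uniformly bounded (constants of arbitrary size lie in it), and ``absorbing a translation'' does not fix this uniformly over the class. The paper sidesteps this by working only with the uniform continuity of $K$ and the Lipschitz bound to control $\|f-\rho_P^*\rho_P f\|$; you would need to restrict to a bounded subset or rephrase your compactness step similarly. Second, your worry about the outer $g$ is well-placed: the paper's computation in fact realizes $\scriptd[f]=\rho_Q^*\big(g(\rho_P(f)^TW^1)^TW^2\big)$, i.e.\ it treats the final $\mathfrak{o}$-layer as linear and does not apply $g$ at the output node, so your ``linearize $g$ on a bounded range'' or ``approximate $g^{-1}\circ K$'' tricks are exactly the kind of patch that would be needed if one insists on an activation at the last layer.
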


 With two layer operator networks universal, it remains to consider $\mathfrak{d}$-deconvolutional layers. 
\begin{theorem}[Nonlinear Basis Approximation]\label{thm:basis}
Suppose $I, E_{2}, E_3$ are compact  intervals, and let $C^{\omega}(X)$ denote the set of analytic functions on $X$. If $B: I^n \to C^{\omega}(E_3)$ is a continuous basis map to analytic functions then for every $\epsilon > 0$ there exists a deep function machine 
\begin{equation}
    \begin{tikzcd}
    \scriptd: \boxed{\mathbb{R}^n} \arrow{r}{\mathfrak{d}} &\boxed{L^1(E_{2})} \arrow{r}{\mathfrak{o}}& \boxed{L^1(E_{3})}
  \end{tikzcd}
  \end{equation}  
  such that $\|D|_{I^n} - B\| < \epsilon$ in the topology of uniform convergence.
\end{theorem}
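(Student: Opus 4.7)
The plan is to factor the approximation through a Stone--Weierstrass decomposition of $B$ and then assemble the pieces into $\mathfrak{d}$- and $\mathfrak{o}$-layer form, reducing the DFM approximation question to a classical discrete universal approximation question in the spirit of Theorem~\ref{thm:o_univ}.

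First, I would observe that the joint evaluation $(\mathbf{x},z)\mapsto B(\mathbf{x})(z)$ is continuous on the compact product $I^n\times E_3$: this is immediate from continuity of $B$ into $C^{\omega}(E_3)$ with the uniform topology and continuity of pointwise evaluation. Stone--Weierstrass on $I^n\times E_3$ then yields, for any $\eta>0$, a polynomial expansion $P(\mathbf{x},z)=\sum_{k=0}^{N}p_k(\mathbf{x})\,z^k$ with continuous $p_k:I^n\to\mathbb{R}$ such that $\sup_{I^n\times E_3}|B(\mathbf{x})(z)-P(\mathbf{x},z)|<\eta$.

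Second, I would apply the classical universal approximation theorem of \cite{univapprox} to each $p_k$: since $g$ is non-polynomial and $p_k$ is continuous on the compact set $I^n$, there exists a finite single-hidden-layer network $\sum_j c_{jk}\,g(\alpha_{jk}\cdot\mathbf{x}+b_{jk})$ approximating $p_k$ to arbitrary precision. Collating all hidden units across $k$ and absorbing the $z^k$ factors into the output coefficients, I obtain a uniform approximation
\[
B(\mathbf{x})(z) \;\approx\; \sum_{m=1}^{M} g(\alpha_m\cdot\mathbf{x} + b_m)\,\tilde c_m(z),
\]
where each $\tilde c_m(z)$ is a polynomial in $z$, hence analytic. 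To realize this as a DFM of the stipulated type, I would partition $E_2$ into $M$ disjoint subintervals $A_1,\dots,A_M$ of positive measure and set $\omega_1(u)=\alpha_m$ on $A_m$ for the $\mathfrak{d}$-kernel and $\omega_2(u,z)=\tilde c_m(z)/\mu(A_m)$ on $A_m$ for the $\mathfrak{o}$-kernel; the $\mathfrak{o}$-integral then collapses exactly to the finite sum. Biases inside the hidden units can be folded in either by exploiting the affine freedom permitted for $T_\ell$ or by appending a constant coordinate to $\mathbf{x}\in I^n$.

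The main obstacle will be the outer activation $g$ attached to the terminal $\mathfrak{o}$-node, which by hypothesis must be non-polynomial and so cannot simply be the identity; naively this would constrain the DFM's output to the range of $g$. I would circumvent this exactly as in Theorem~\ref{thm:p_approx}: approximate $g^{-1}\circ B$ rather than $B$ itself---still a continuous map on $I^n\times E_3$ into $\mathbb{R}$ on the relevant range, so Stone--Weierstrass and \cite{univapprox} remain applicable---and then the composition with the terminal $g$ recovers $B$ up to the desired $\epsilon$ by uniform continuity of $g$ on compacts. The analyticity hypothesis on $B$ enters precisely to guarantee that the polynomial coefficient functions $\tilde c_m(z)$ are themselves admissible as kernels $\omega_2(\cdot,z)$ in $L^1(E_3)$, closing the construction.
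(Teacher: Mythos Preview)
Your argument is correct at the level of rigor the paper itself operates at, but it follows a genuinely different route from the paper's own proof. The paper does \emph{not} invoke Stone--Weierstrass on the product $I^n\times E_3$. Instead it uses analyticity intrinsically: it factors $B$ exactly as $\kappa\circ\Delta$, where $\Delta:\mathbb{R}^n\to\ell^1(\mathbb{R})$ records the Taylor coefficients of $B(\mathbf{x})$ in the monomial basis of $C^{\omega}(E_3)$ and $\kappa:(a_i)\mapsto\sum_i a_i z^i$. A compactness argument on $B[I^n]$ (an open cover by $\epsilon$-thickenings of degree-$\leq N$ polynomials) yields a finite truncation $\Delta^{(N)}$ with $\|\kappa\circ\Delta^{(N)}-B\|<\epsilon$. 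The finite-dimensional coefficient map $\psi_N:\mathbb{R}^n\to\mathbb{R}^N$ is then approximated by a two-layer discrete network and instantiated as the $\mathfrak{d}$-layer; the remaining map $\kappa\circ\rho$ is a bounded linear operator on the relevant compact image, so the paper invokes its own linear-operator approximation theorem to realise it as a single $\mathfrak{o}$-layer.

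What your approach buys: it is more elementary and self-contained---you avoid the linear-operator theorem entirely by building explicit piecewise-constant kernels on a partition of $E_2$, and in fact your argument never uses analyticity of the target (Stone--Weierstrass on $I^n\times E_3$ needs only $B:I^n\to C(E_3)$ continuous), so you have implicitly proved a stronger statement. Your closing remark that analyticity is needed so that the $\tilde c_m$ are admissible $L^1$ kernels is a misattribution: polynomials on compact intervals are automatically $L^1$. What the paper's approach buys: it is modular (recycling the bounded-linear-operator result) and makes the r\^ole of the hypothesis $C^{\omega}$ transparent via the Taylor-coefficient factorisation. Your treatment of the terminal activation via $g^{-1}\circ B$ carries the implicit caveat that $B(\mathbf{x})(z)$ must land in the range of $g$; this is the same caveat already present in Theorem~\ref{thm:p_approx}, and the paper's proof is no more careful on this point than yours.
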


To the best of our knowledge, the above are the first approximation theorems for nonlinear operators and basis maps on function spaces for neural networks. The proofs in the appendix roughly involve a factorization of arbitrary DFMs through approximation theories for $\mathfrak{n}$-discrete layers. 

Essentially, the factorization works as follows. In both of the foregoing theorems we want to roughly approximate some nonlinear map $K$ with a DFM $\scriptd$. We therefore define an operator $|K|$, called an \emph{affine projection}, that takes functions, converts them into piecewise constant approximations, applies $K$, and then again converts the result to piecewise constant approximations. Since there are a finite number, say $N$ and $M$, of pieces given in the input and the output of $|K|$ respectively, we can define an operator ${\tilde K} : \mathbb{R}^N \to \mathbb{R}^M$, called a \emph{lattice map}, which in some sense reproduces $|K|$. We then show both Theorem \ref{thm:o_univ} and Theorem \ref{thm:basis} by approximating ${\tilde K}$ with a discrete neural network, $\scriptn$, and chosing $\scriptd$ to be such that its discreitzation is $\scriptn$. Surprisingly, this principle holds for any DFM structure and a large class of different $K$ not just those which use piecewise constant approximations!
%!TEX root = ../main.tex
\section{Neural Topology from Topology}
As we have now shown, deep function machines can express arbitrarily powerful configurations of 'perceptron layer' mappings betwen different spaces. However, it is not yet theoretically clear how different configurations of the computaional skeleton and the particular spaces $X_\ell$ do or do not lead to a difference in expressiveness of DFMs. To answer questions of structure, we will return to the motivating example of high-resolution data, but now in the language of deep function machines. 

\subsection{Resolution Invariant Neural Networks}
If an an input $(x_j) \in \mathbb{R}^N$ is sampled from an continuous function $\xi \in C(E_\ell)$, $\mathfrak{o}$-operational layers are a natural way of extending neural networks to deal directly with $\xi.$ As before, it is useful to think of each $\mathfrak{o}$ as a continuous relaxation of a class of $\mathfrak{n},$ and from this perspective we can gain insight into the weight tensors of $\mathfrak{n}$-discrete layers as the resolution of $x$ increases.

\begin{theorem}[Invariance]\label{thm:invariance}
  If $T_\ell$ is an $\mathfrak{o}$-operational layer with an integrable weight kernel $\omega(u,v)$ of $\scripto(1)$ parameters, then there is a unique fully connected $\mathfrak{n}$-discrete layer, $N_\ell$, with $\scripto(N)$ parameters so that $T_\ell[\xi](j) = N_\ell(x)_j$ for all $\xi, x$ as above.
\end{theorem}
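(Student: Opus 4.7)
The plan is to construct $N_\ell$ explicitly by discretizing the defining integral of the $\mathfrak{o}$-operational layer, mirroring the simple-function construction used earlier in the paper to motivate operator neural networks. Given the sample $x = (x_1, \ldots, x_N)$ of $\xi$ at grid points $u_1, \ldots, u_N \in E_\ell$, I would partition $E_\ell$ into intervals $I_i$ and identify $x$ with the associated simple function $f^{(N)} = \sum_{i=1}^N x_i \chi_{I_i}$, so that evaluating $T_\ell[\xi]$ in the discretized regime really means evaluating $T_\ell[f^{(N)}]$. The output coordinates $j$ would correspond to a fixed set of evaluation points $v_j \in E_{\ell'}$.

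The key computation exploits that the Lebesgue integral of a simple function against $\omega(\cdot, v_j)$ collapses to a finite sum:
\begin{equation}
T_\ell\bigl[f^{(N)}\bigr](v_j) \;=\; \int_{E_\ell} f^{(N)}(u)\,\omega(u, v_j)\, d\mu(u) \;=\; \sum_{i=1}^N x_i \int_{I_i} \omega(u, v_j)\, d\mu(u).
\end{equation}
Setting $W_{ij} := \int_{I_i} \omega(u, v_j)\, d\mu(u)$ gives $T_\ell[f^{(N)}](v_j) = (W^{T} x)_{j}$, which is precisely the definition of an $\mathfrak{n}$-discrete layer $N_\ell$ with weight matrix $W$. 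Uniqueness is then immediate: once $\omega$ and the input/output sampling grids are fixed, every entry $W_{ij}$ is pinned down by the displayed integral, so no other matrix can realise the same map on arbitrary $x$.

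For the parameter count, $\omega$ is described by $\mathcal{O}(1)$ free parameters independent of $N$, while each of the $N$ entries $W_{ij}$ in an output row is the image of that $\mathcal{O}(1)$ parameterisation under the integration map above. Thus $N_\ell$ contains $\mathcal{O}(N)$ entries but introduces no new degrees of freedom beyond those already present in $\omega$, which is the resolution-invariance content of the theorem. The main subtlety, rather than a genuine technical obstacle, is the reading of $T_\ell[\xi](j) = N_\ell(x)_j$: it should be interpreted at the level of the canonical simple-function encoding of $x$, and for the underlying continuous $\xi$ the discrepancy $T_\ell[\xi](v_j) - N_\ell(x)_j = T_\ell[\xi - f^{(N)}](v_j)$ vanishes as the partition refines, recovering the continuous-relaxation picture that motivated $\mathfrak{o}$-operational layers in the first place.
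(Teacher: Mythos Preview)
Your construction is correct and actually cleaner than the paper's, but it follows a genuinely different discretization scheme. You identify the sample $x$ with the \emph{piecewise constant} simple function $f^{(N)} = \sum_i x_i \chi_{I_i}$ and read off $W_{ij} = \int_{I_i} \omega(u,v_j)\,d\mu(u)$ in one line. The paper instead takes $\xi$ to be the \emph{piecewise linear} interpolant through the sample points, so that on each subinterval $[n,n+1]$ the integrand is $((x_{n+1}-x_n)(u-n)+x_n)\,\omega_\ell(u,v)$; this forces the introduction of two families of auxiliary integrals $V_n(v)=\int_n^{n+1}(u-n)\omega_\ell(u,v)\,d\mu$ and $Q_n(v)=\int_n^{n+1}\omega_\ell(u,v)\,d\mu$, followed by a telescoping rearrangement to isolate the coefficient of each $x_n$, giving $W_{n,j}=Q_n(j)-V_n(j)+V_{n-1}(j)$ with special endpoint formulas. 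Your route buys simplicity and transparency; the paper's buys a higher-order (Lipschitz-exact rather than $L^\infty$-exact) reconstruction of $\xi$ from $x$, which matters if one cares about the rate at which $T_\ell[\xi]-N_\ell(x)$ vanishes under refinement. You correctly flag that the exact equality in the statement must be read at the level of the chosen interpolant of $x$, and the paper makes the same tacit identification.
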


\begingroup
\setlength\intextsep{-8pt}
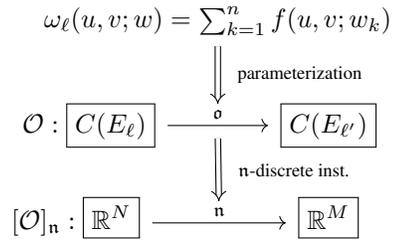
\begin{wrapfigure}[12]{r}{0.38\textwidth}
\begin{center}
  \textbf{Resolution Invariance Schema} 
\end{center}
  \begin{equation*}
    \begin{tikzcd}
      &[-75pt]\omega_\ell(u,v; w) = \sum_{k=1}^n f(u,v;w_k)&[-75pt] \\
       \ \ \ \  \scripto: \boxed{C(E_\ell)} \arrow[rr, "\mathfrak{o}"{name=O}, ""'{name=OB}] &&\boxed{C(E_{\ell'})} \arrow[from=1-2, to=O, Rightarrow, "\ \ \text{parameterization}"] \\[-7pt]
        [\scripto]_\mathfrak{n}: \boxed{\mathbb{R}^N} \arrow[rr, "\mathfrak{n}"{name=N}] &&\boxed{\mathbb{R}^M} \arrow[from=OB, to=N, Rightarrow, "\ \ {\mathfrak{n}} \text{-discrete inst.}"]
    \end{tikzcd}
  \end{equation*}
  \caption{DFM construction of resolution invariant $\mathfrak{n}$-discrete layer.}\label{fig:invariance_schema}
\end{wrapfigure}
Theorem \ref{thm:invariance} is a statement of variance in parameterization; when the input is a sample of a smooth signal, fully connected $\mathfrak{n}$-discrete layers are naively overparameterized.

 DFMs therefore yield a simple resolution invariance scheme for neural networks. Instead of placing arbitrary restrictions on $W_\ell$ like convolution or assuming that the gradient descent will implicitly find a smooth weight matrix or filter $W_{\ell}$ for $\mathfrak{n}$, we take $W_\ell$ to be the discretization of a smooth $\omega_\ell(u,v)$. An immediate advantage is that the weight surfaces, $\omega_\ell(u,v)$, of $\mathfrak{o}$-operational layers can be parameterized by dense families $f(u,v; w)$, whose parameters $w$ do not depend on the resolution of the input but on the complexity of the model being learnt. 

\endgroup

\subsection{Topologically Inspired Layer Parameterizations}
Furthermore, we can now explore new parameterizations by constructing weight tensors and thereby neural network topologies which approximate the action of the operator neural networks which most expressively fit the topological properties of the data. Generally new restrictions on the weights of discrete neural networks might be achieved as follows:
\begin{enumerate}
    \item  Given that the input data $x$ is sampled from some $f \in \scriptf \subset \{ g: E_0 \to \mathbb{R}\}$, find a closed algebra of weight kernels so that $\omega_0 \in \scripta_0$ is minimally parameterized and $g \circ \mathfrak{o}[\scriptf]$ is a sufficiently "rich" class of functions. 
   \item Repeat this process for each layer of a computational skeleton $\scripts$ and yield a DFM $\scripto$.
   \item Instantiate a deep function machine $[\scripto]_\mathfrak{n}$ called \emph{the $\mathfrak{n}$-discrete instatiation of $\scripto$} consisting of only $\mathfrak{n}$-discrete layers by discretizing each $\mathfrak{o}$-operational layer through the resolution invariance schema sample:  
   \begin{equation}
      W_\ell = \left[w^\ell\left({\frac{i}{e_{u_1}},\frac{j}{e_{u_2}}, \cdots,\frac{k}{e_{v_1}},\frac{t}{e_{v_2}}, \cdots}\right)\right]_{ij\dots kt \dots}
    \end{equation}  where $e_\eta$ denotes the cardinality of the sample along the $\eta$-axis of $E_\ell = [0,1]^{dim(E_\ell)}$. This process is depicted in Figure \ref{fig:invariance_schema}.  
\end{enumerate}

This perspective yields interpretations of existing layer types and the creation of new ones. For example, convolutional $\mathfrak{n}$-discrete layers provably approximate $\mathfrak{o}$-operational layers with weight kernels
that are solutions to the ultrahyperbolic partial differential equation.
\begin{theorem}[Convolutional Neural Networks] \label{ex:conv}
  Let $N_\ell$ be an $\mathfrak{n}$-discrete convolutional layer such that $\mathfrak{n}(x) = h \star x$ where $\star$ is the convolution operator and $h$ is a filter tensor. Then there is a $\mathfrak{o}$-operational layer, $O_\ell$ with $\omega_\ell(u_1, \dots, u_n, v_1, \dots, v_n)$ such that
  \begin{equation}\label{eq:hyperbolic}
    \sum_{k=1}^n\frac{\partial^2 \omega}{\partial^2 u_k} = c^2 \sum_{k=1}^n\frac{\partial^2 \omega}{\partial^2 v_k}
  \end{equation}
  and its $\mathfrak{n}$-discrete instatiation is $[O_\ell]_\mathfrak{n} = N_\ell$.
\end{theorem}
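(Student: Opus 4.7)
The plan is to construct $\omega_\ell$ directly from the filter $h$ by encoding the shift structure of convolution as a translation-invariant kernel. Concretely, fix any sufficiently smooth extension of the discrete filter $h$ to a function $\tilde h : \mathbb{R}^n \to \mathbb{R}$ whose samples on the appropriate lattice recover $h$, and define
\begin{equation*}
\omega_\ell(u_1,\dots,u_n,v_1,\dots,v_n) \;:=\; \tilde h(u_1 - v_1,\, \dots,\, u_n - v_n).
\end{equation*}
The associated $\mathfrak{o}$-operational layer $O_\ell[\xi](v) = \int \omega_\ell(u,v)\,\xi(u)\,d\mu(u)$ is then precisely continuous convolution of $\xi$ with $\tilde h$.

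Next I would verify the ultrahyperbolic PDE with $c=1$. Because $\omega_\ell$ depends on the $u_k$ and $v_k$ only through the differences $u_k - v_k$, the chain rule gives $\partial_{u_k}\omega_\ell = (\partial_k \tilde h)(u-v)$ and $\partial_{v_k}\omega_\ell = -(\partial_k \tilde h)(u-v)$, so $\partial^2_{u_k}\omega_\ell = \partial^2_{v_k}\omega_\ell = (\partial^2_k \tilde h)(u-v)$ for each $k$. Summing over $k=1,\dots,n$ immediately produces equation \eqref{eq:hyperbolic} with $c=1$, identically in $(u,v)$. This establishes that the constructed $\omega_\ell$ lies in the solution class of the stated PDE.

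Finally I would check the discrete instantiation. Applying the discretization schema from Figure \ref{fig:invariance_schema} to $\omega_\ell$ on the uniform grid of $E_\ell = [0,1]^{n}$ gives a weight tensor whose $(\mathbf{i},\mathbf{j})$ entry equals $\tilde h(\mathbf{i}/e - \mathbf{j}/e)$, i.e.\ a Toeplitz (cross-correlation) tensor depending only on the index difference $\mathbf{i}-\mathbf{j}$. Multiplying this tensor into a sampled input $x$ is, by definition, the discrete convolution $h \star x$, so $[O_\ell]_\mathfrak{n}$ coincides with $N_\ell$ as required.

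The main obstacles are mild but worth flagging. First, the filter $h$ is defined only on a lattice, so one must choose a smooth extension $\tilde h$ for the PDE computation; any $C^2$ interpolant (e.g.\ a band-limited or spline extension) suffices, and the PDE is then satisfied by the extended kernel rather than by $h$ itself. Second, there is a mismatch between convolution on the bounded domain $E_\ell$ and the infinite-domain convolution formula, which must be handled by a boundary convention (compactly supported $\tilde h$, circular extension, or zero-padding), matching whichever convention is used by the discrete layer $N_\ell$; all of these are compatible with the translation-invariant ansatz and do not affect the PDE verification, which is a purely local differential identity.
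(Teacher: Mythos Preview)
Your proposal is correct and follows essentially the same approach as the paper: both construct $\omega_\ell$ as a translation-invariant (d'Alembert-type) kernel of the form $F(u-cv)$---you take $c=1$ and $F=\tilde h$---observe that this automatically solves the ultrahyperbolic PDE, and then check that its discretization recovers $h\star x$. Your treatment is in fact more careful than the paper's own argument (which simply cites the general solution $F(u-cv)+G(u+cv)$, sets $G=0$, interpolates $F(j)=h_j$, and invokes Theorem~\ref{thm:invariance}); in particular you explicitly verify the PDE via the chain rule and flag the interpolation and boundary-convention issues that the paper leaves implicit.
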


% Example \ref{ex:conv} raises two questions: would convolutional filters in neural networks benefit from employing some parameterization of $G(j)$ or similarly augmenting such parameterizations? More generally, is there a rigorous topological formulation of translation invariant families in function space so that DFMs with layers of the form \eqref{eq:wave} are maximally expressive? Although it is the subject of future work to answer the latter question, we will propose a variety of new parameterizations.

% \begin{figure}
% \begin{center}
%   \includegraphics[width=0.52\textwidth]{}
%   \includegraphics[width=0.47\textwidth]{}
% \end{center}
% \caption{The weight kernels of an $\mathfrak{n}$-discrete wave layer (left) and fully connected layer (right) after training on a regression experiment.}
% \end{figure}

Using Theorem \ref{ex:conv} we therefore propose the following generalization of convolutional layers with weight kernels satisfying \eqref{eq:hyperbolic} whose $\mathfrak{n}$-discrete instantiation is resolution invariant.
\begin{definition}[WaveLayers]
  We say that $T_\ell$ is a \emph{WaveLayer} if it is the $\mathfrak{n}$-discrete instantiation (via the resolution invariance schema) of an $\mathfrak{o}$-operational layer with weight kernel of the form
  \begin{equation}\label{eq:wave}
    \omega_\ell(u,v) = s_0 + \sum_{i=1}^b s_i \cos(w^T_i (u,v)- p_i);\;\; s_i,  p_i \in \mathbb{R}, w_i \in \mathbb{R}^{2\text{dim}(E_\ell)}.
  \end{equation}
\end{definition}

WaveLayers\footnote{\emph{Note: WaveLayers are not not the same as Fourier networks or FC layers with $\cos$ activation functions.} It suffices to view wave layers as simply another way to reparameterize the weight matrix of $\mathfrak{n}$-discrete layers, and therefore the VC dimension of WaveLayers is less than that of normal fully connected layers. See the appendix.} 
 are named as such, because the kernels $\omega_\ell$ are super position standing waves moving in directions encoded by $w_i$, offset in phase by $p_i$. Additionally any $\mathfrak{n}$-discrete convolutional layer can be expressed by WaveLayers, setting the direction $\theta_i$ of $w_i$ to $\theta_i = \pi/4$. In this case, instead of learning the values $h$ at each $j$, we learn $s_i, w_i, p_i$.

% These parameterizations are just a few examples of the potential for creative layer expression conditioned on the topology of the data. Although we have provided a cursory exploration into new layer types and explainations of existing layers using DFMs, there is potential for future work in both describing expressivity as it relates to the topology of $\scriptf$ and also how specific topological properties of the data, such as DeRham Cohomology and Connectedness, relate to expressivity.
%!TEX root = ../main.tex

\section{Experiments}
With theoretical guarantees given for DFMs, we propose a series of experiments to test the learnability, expressivity, and resolution invariance of WaveLayers. 

\begin{figure}
    \begin{center}
		 \includegraphics[width=0.7\textwidth]{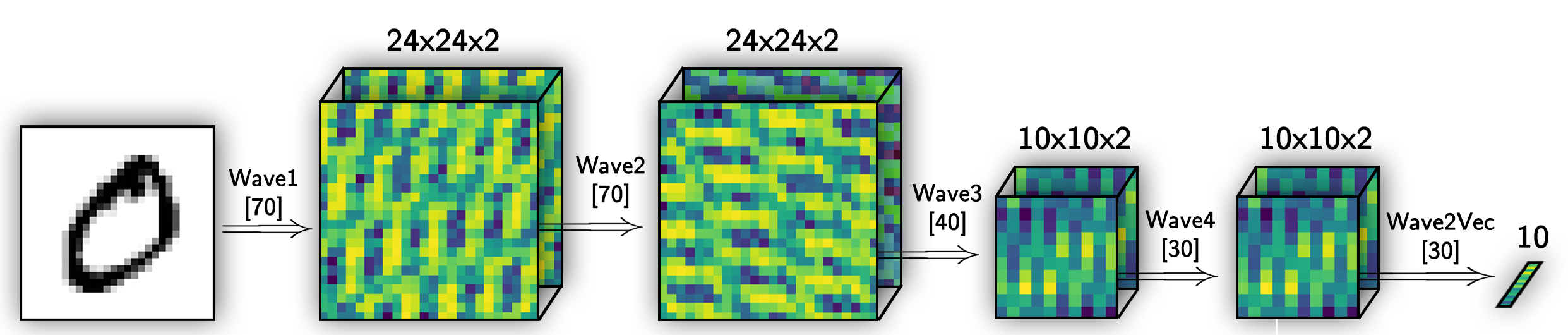}
\caption{The WaveLayer architecture of RippLeNet for MNIST. Bracketed numbers denote number of wave coefficients. The images following
each WaveLayer are the example activations of neurons after training given by feeding $'0'$ into RippLeNet.}\label{fig:ripple}
	\end{center}
\end{figure}

\textbf{RippLeNet.} To find a baseline model, we performed a grid search on MNIST over various DFMs and hyperparameters, arriving at RippLeNet, an architecture similar to the classic LeNet-5 of \cite{lecun1998gradient} depicted in Figure \ref{fig:ripple}. The model is the $\mathfrak{n}$-discrete instatiation of the following DFM
\begin{equation}
	\includegraphics[width=0.6\textwidth]{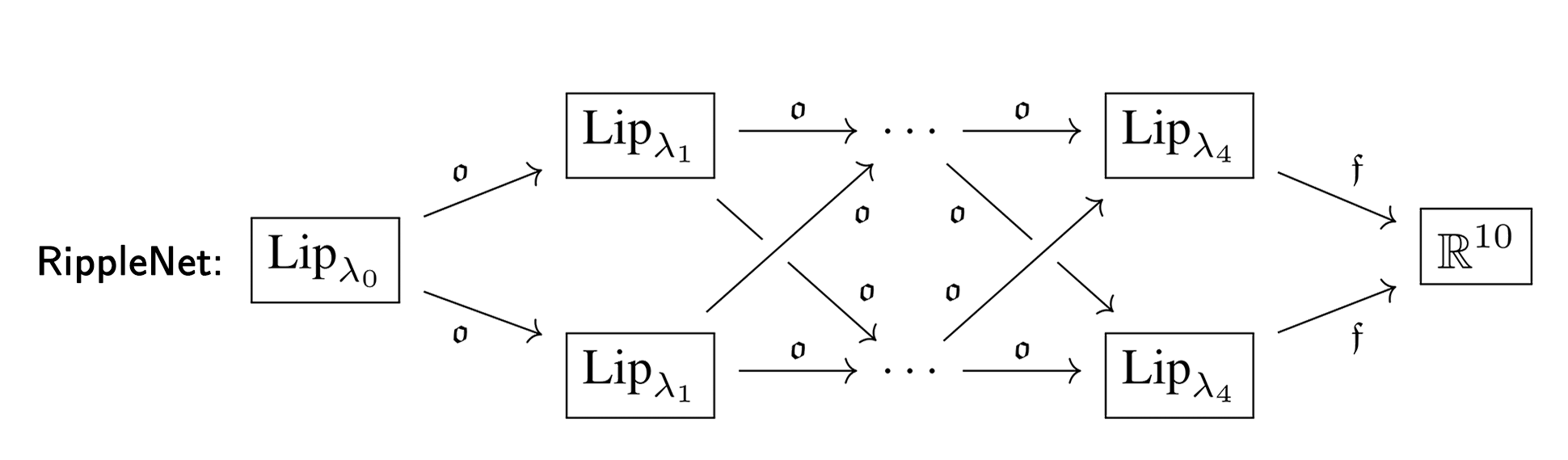}\;\;\;\;\;\;\;\;\;\;\;\;
\end{equation} and consiststs of 5 successive wave layers with $\tanh$ activations and no pooling. We found that using ReLu often resulted in a failure of learning to converge. Changes in the activation shapes (eg. 24x24x2 $\to$ 10x10x2) are achieved merely by specifying the sample partition of $E_\ell$ at each node of the DFM. The trainable parameters, in particular the magnitude of internal frequencies, were initialized at offsets proportional to the number of waves comprising each $\mathfrak{o}$-operational layer. Likewise, orientation of each wave was initialized uniformly on the unit spherical shell. 
\begin{wrapfigure}{r}{0.5\textwidth}
\includegraphics[width=0.5\textwidth]{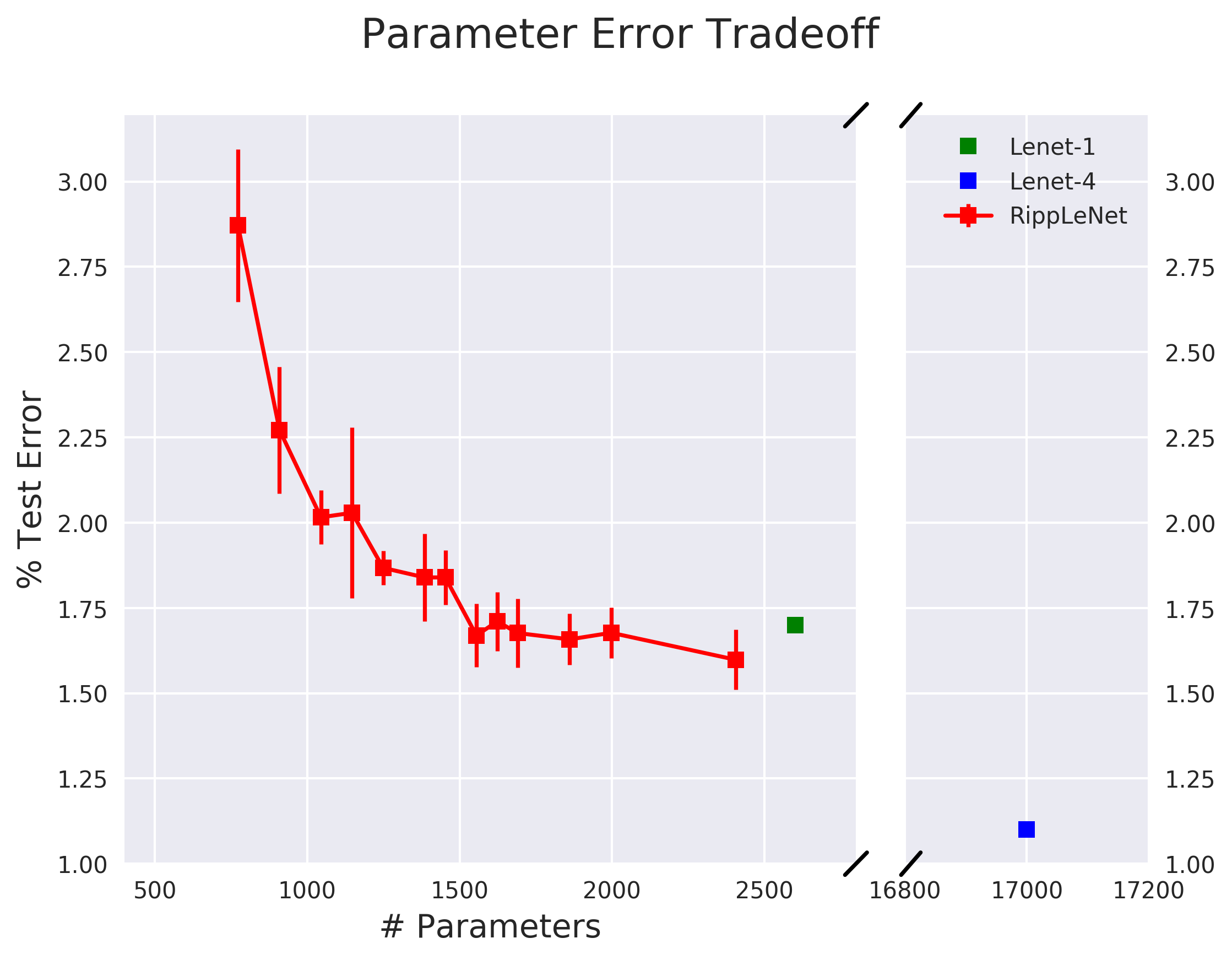}
\caption{A plot of test error versus number of trainable parameters for RippLeNet in comparison with early LeNet architectures.}
\label{fig:param_red}
\end{wrapfigure}

\textbf{Model Expressive Parameter Reduction.} In Theorem \ref{thm:invariance}, it was shown that DFMs in some sense parameterize the complexity of the model being learned, without constraints imposed by the form of data. RippLeNet benefits in that the sheer number of parameters (and therefore the variance) can be reduced until the model expresses the mapping to satisfactory error without concern for resolution variants. 

We emprically verify this methodology by fixing the model architecture and increasing the number of waves per layer uniformly. This results in an exponential marginal utility on lowest error achieved after $80$ epochs of MNIST with respect to the number of parameters in the model, shown in Figure \ref{fig:param_red}. The slight outperformance of early LeNet architectures, suggest that future work in optimizing WaveLayers might be fruitful in achieving state of the art parameter reduction.

\begingroup
\begin{wrapfigure}{l}{0.5\textwidth}
	\includegraphics[width=0.5\textwidth]{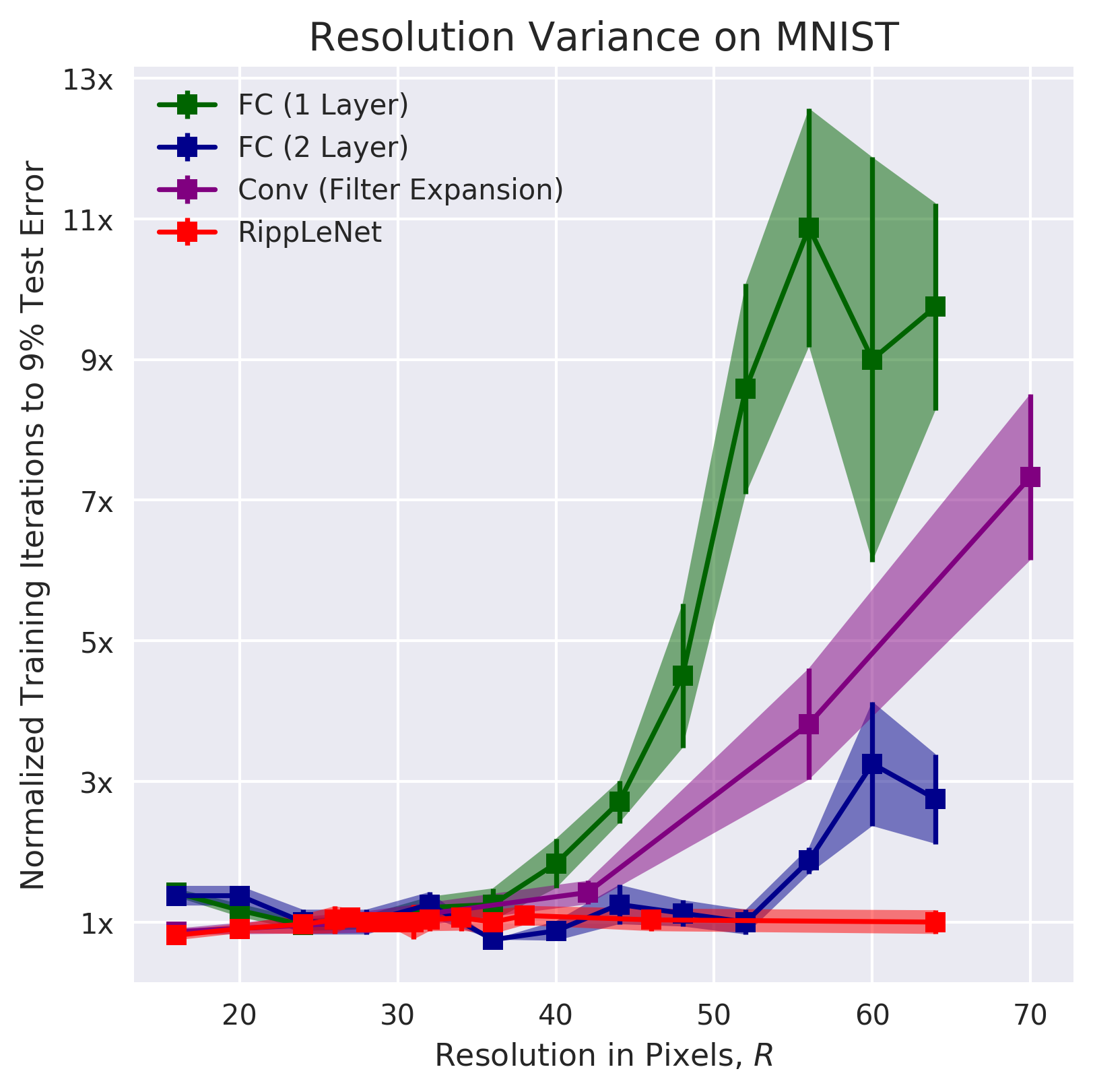}
	\caption{A plot of training time (normalized for each layer type with respect to the training time for $28\times28$ baseline) as the resolution of MNIST scales.}

\label{fig:inv_plot}
\end{wrapfigure}

\textbf{Resolution Invariance.} True resolution invariance has the desirable property of consistency. Principly, consistency requires that regardless of the the resolution complexity of data, the training time, paramerization, and testing accuracy of a model do not vary.  

We test consistency for RippLeNet by fixing all aspects of initialization save for the input resolution of images. For each training run, we rescale MNIST using both bicubic and nearest neighbor scaling to square resolutions of sidelength $R= \{16, \dots, 36, 38, 46, 64, \dots\}$.  In conjuction we compare the resolution consistency of fully connected (FC) and convolutional architectures. For FC models,  the number of free parameters on the first layer is increased out of necessity. Likewise, the size of the input filters for convolutional models is varied. As shown in Figure \ref{fig:inv_plot}, WaveLayers remain invariant to resolution changes in both multirun variance and normalized convergence iterations, whereas both FC and convolutional layers exhibit an increase in both measurements with resolution.

\endgroup

%\titlespacing{\section}{0pt}{0pt}{0pt}
\section{Conclusion}

%!TEX root = ../main.tex
% The conclusion label is in experiment.tex

In this paper we proposed deep function machines, a novel framework for topologically inspired layer parameterization. We showed that given topological assumptions, DFMs provide theoretical tools to yield provable properties in neural neural networks. We then used this framework to derive WaveLayers, a new type of provably resolution invariant layer for processing data sampled from continuous signals such as images and audio. The derivation of WaveLayers was additionally accompanied by the proposal of several layer operations for DFMs between infinite and/or finite dimensional vector spaces. We for the first time proved a theory of non-linear operator and functional basis approximation for neural networks of infinite dimensions closing two long standing questions since \cite{stinchcombe1999neural}. We then utilized the expressive power of such DFMs to arrive at a novel architecture for resolution invariant image processing, RippLeNet. 

\textbf{Future Work. }Although we've layed the ground work for exploration into the theory of deep function machines, there are still many open questions both theoretically and empirically. The drastic outperformance in resolution variance of RippLeNet in comparision to traditional layer types suggests that new layer types via DFMs with provable properties in mind should be further explored. Furthermore  a deeper analysis of existing global network topologies using DFMs may be useful given their expressive power.

\bibliographystyle{iclr2018_conference}
\bibliography{dfm}

\begin{thebibliography}{18}
\providecommand{\natexlab}[1]{#1}
\providecommand{\url}[1]{\texttt{#1}}
\expandafter\ifx\csname urlstyle\endcsname\relax
  \providecommand{\doi}[1]{doi: #1}\else
  \providecommand{\doi}{doi: \begingroup \urlstyle{rm}\Url}\fi

\bibitem[Burch(2001)]{mlsurvey}
Carl Burch.
\newblock A survey of machine learning.
\newblock A survey for the Pennsylvania Governor's School for the Sciences,
  2001.

\bibitem[Cho \& Saul(2011)Cho and Saul]{cho2011analysis}
Youngmin Cho and Lawrence~K Saul.
\newblock Analysis and extension of arc-cosine kernels for large margin
  classification.
\newblock \emph{arXiv preprint arXiv:1112.3712}, 2011.

\bibitem[Cybenko(1989)]{univapprox}
G.~Cybenko.
\newblock Approximation by superpositions of a sigmoidal function.
\newblock \emph{Mathematics of Control, Signals, and Systems}, 2:\penalty0
  303--3314, 1989.

\bibitem[Daniely et~al.(2016)Daniely, Frostig, and Singer]{daniely2016toward}
Amit Daniely, Roy Frostig, and Yoram Singer.
\newblock Toward deeper understanding of neural networks: The power of
  initialization and a dual view on expressivity.
\newblock \emph{arXiv preprint arXiv:1602.05897}, 2016.

\bibitem[Globerson \& Livni(2016)Globerson and Livni]{globerson2016learning}
Amir Globerson and Roi Livni.
\newblock Learning infinite-layer networks: beyond the kernel trick.
\newblock \emph{arXiv preprint arXiv:1606.05316}, 2016.

\bibitem[Hazan \& Jaakkola(2015)Hazan and Jaakkola]{hazan2015steps}
Tamir Hazan and Tommi Jaakkola.
\newblock Steps toward deep kernel methods from infinite neural networks.
\newblock \emph{arXiv preprint arXiv:1508.05133}, 2015.

\bibitem[Le~Roux \& Bengio(2007)Le~Roux and Bengio]{le2007continuous}
Nicolas Le~Roux and Yoshua Bengio.
\newblock Continuous neural networks.
\newblock 2007.

\bibitem[LeCun et~al.(1998)LeCun, Bottou, Bengio, and
  Haffner]{lecun1998gradient}
Yann LeCun, Leon Bottou, Yoshua Bengio, and Patrick Haffner.
\newblock Gradient-based learning applied to document recognition.
\newblock \emph{Proceedings of the IEEE}, 86\penalty0 (11):\penalty0
  2278--2324, 1998.

\bibitem[McCulloch \& Pitts(1943)McCulloch and Pitts]{mcculloch}
Warren~S McCulloch and Walter Pitts.
\newblock A logical calculus of the ideas immanent in nervous activity.
\newblock \emph{The bulletin of mathematical biophysics}, 5\penalty0
  (4):\penalty0 115--133, 1943.

\bibitem[Neal(1996)]{neal}
Radford~M Neal.
\newblock \emph{Bayesian learning for neural networks}, volume 118.
\newblock 1996.

\bibitem[Poole et~al.(2016)Poole, Lahiri, Raghu, Sohl-Dickstein, and
  Ganguli]{poole2016exponential}
Ben Poole, Subhaneil Lahiri, Maithreyi Raghu, Jascha Sohl-Dickstein, and Surya
  Ganguli.
\newblock Exponential expressivity in deep neural networks through transient
  chaos.
\newblock In \emph{Advances In Neural Information Processing Systems}, pp.\
  3360--3368, 2016.

\bibitem[Raghu et~al.(2016)Raghu, Poole, Kleinberg, Ganguli, and
  Sohl-Dickstein]{raghu2016expressive}
Maithra Raghu, Ben Poole, Jon Kleinberg, Surya Ganguli, and Jascha
  Sohl-Dickstein.
\newblock On the expressive power of deep neural networks.
\newblock \emph{arXiv preprint arXiv:1606.05336}, 2016.

\bibitem[Rossi et~al.(2002)Rossi, Conan-Guez, and
  Fleuret]{rossi2002theoretical}
Fabrice Rossi, Brieuc Conan-Guez, and Fran{\c{c}}ois Fleuret.
\newblock Theoretical properties of functional multi layer perceptrons.
\newblock 2002.

\bibitem[Schmidhuber(2015)]{schmidhuber2015deep}
Jurgen Schmidhuber.
\newblock Deep learning in neural networks: An overview.
\newblock \emph{Neural Networks}, 61:\penalty0 85--117, 2015.

\bibitem[Seeger(2004)]{seeger2004gaussian}
Matthias Seeger.
\newblock Gaussian processes for machine learning.
\newblock \emph{International Journal of neural systems}, 14\penalty0
  (02):\penalty0 69--106, 2004.

\bibitem[Shalev-Shwartz et~al.(2011)Shalev-Shwartz, Shamir, and
  Sridharan]{shalev2011learning}
Shai Shalev-Shwartz, Ohad Shamir, and Karthik Sridharan.
\newblock Learning kernel-based halfspaces with the 0-1 loss.
\newblock \emph{SIAM Journal on Computing}, 40\penalty0 (6):\penalty0
  1623--1646, 2011.

\bibitem[Stinchcombe(1999)]{stinchcombe1999neural}
Maxwell~B Stinchcombe.
\newblock Neural network approximation of continuous functionals and continuous
  functions on compactifications.
\newblock \emph{Neural Networks}, 12\penalty0 (3):\penalty0 467--477, 1999.

\bibitem[Williams(1998)]{williams1998computation}
Christopher~KI Williams.
\newblock Computation with infinite neural networks.
\newblock \emph{Neural Computation}, 10\penalty0 (5):\penalty0 1203--1216,
  1998.

\end{thebibliography}

\newpage

\section{Appendix A: Additional Toy Demonstration}

To provide a direct comparison between convolutional, fully connected, and WaveLayer operations on data with semicontinuity assumptions, we conducted several toy demonstrations. We construct a dataset of functions (similar to a dataset of audio waveforms) whose input pairs are Gaussian bump functions, $B_\mu(u)$ centered at different points in the interval $\mu \in [0,1]$. The cooresponding "labels" or target outputs are squres of gaussian bump functions plus a linear form whose slopes are given by the position of the center, that is $B_\mu(u)^2 + \mu (u- \mu).$ The desired map we wish to learn is then $T: B_\mu(u) \mapsto B_\mu(u)^2 + \mu\cdot (u - \mu)$. To construct the actual dataset $D$, for a random sample of centers $\mu$ we sample the input output pairs over 100 evenly spaced sub-intervals. The resultant dataset is a list of $N$ pairs of input/output vectors $D = \{(x_i,y_i)\}_{i=1}^N$ with $x,y \in \mathbb{R}^{100}.$

We then train three different two layer DFMs with $\mathfrak{n}$-discrete convolutional, fully-connected, and WaveLayers respectively. The following three figures show the outputs of all three layer types as training progresses. In the first three quadrants, the output of each layer type on a particular example datapoint is shown along with that example's input/target functions, $(x_i, y_i)$. The particular example shown is chosen randomly at the beginning of training. In the bottom right, the log training error over the whole dataset of each layer type is shown. A tick is the number of batches seen by the algorithm.

In initialization, the three layers exhibit predicted behavior despite artifacts towards the boundaries of the intervals. The convolutional layer, acts as a mullifer smoothing the input signal as its own kernel is Gaussian. The fully connected layer generates as predicted a normally distributed set of different output activations and does not regard the spatial locality (and thereby continuity) of the input. Finally the WaveLayer output exhibits behaviour predicted in \cite{neal}, that is it limits towards a smoothed random walk over the input signal. 

As training continues, both the convolutional and WaveLayer outputs preserve the continuity of the input signal, and approximate the smoothness of the output signal as induced by their own relation to ultrahyperbolic differential equations. Since the FC layer is not restricted to any given topology, although it approximates the desired output signal closely in the $L^2$ norm, it fails to achieve smoothness as this regularization is not explicity coded. It is important to note that in this example the WaveLayer output immediately surpasses the  accuracy of the convolutional output because the convolutional output only has bias units accross entire channels, whereas the bias units of WaveLayers are themselves functions $\sum_{k=1}^n s_k cos(w_k\cdot v + p_k) + b_k$. Therefore WaveLayers can impose hetrogenously accross their output signals, where as convolutional require much deeper architectures to artifically generate such biases.

  \begin{center}
  \includegraphics[width=0.8\textwidth]{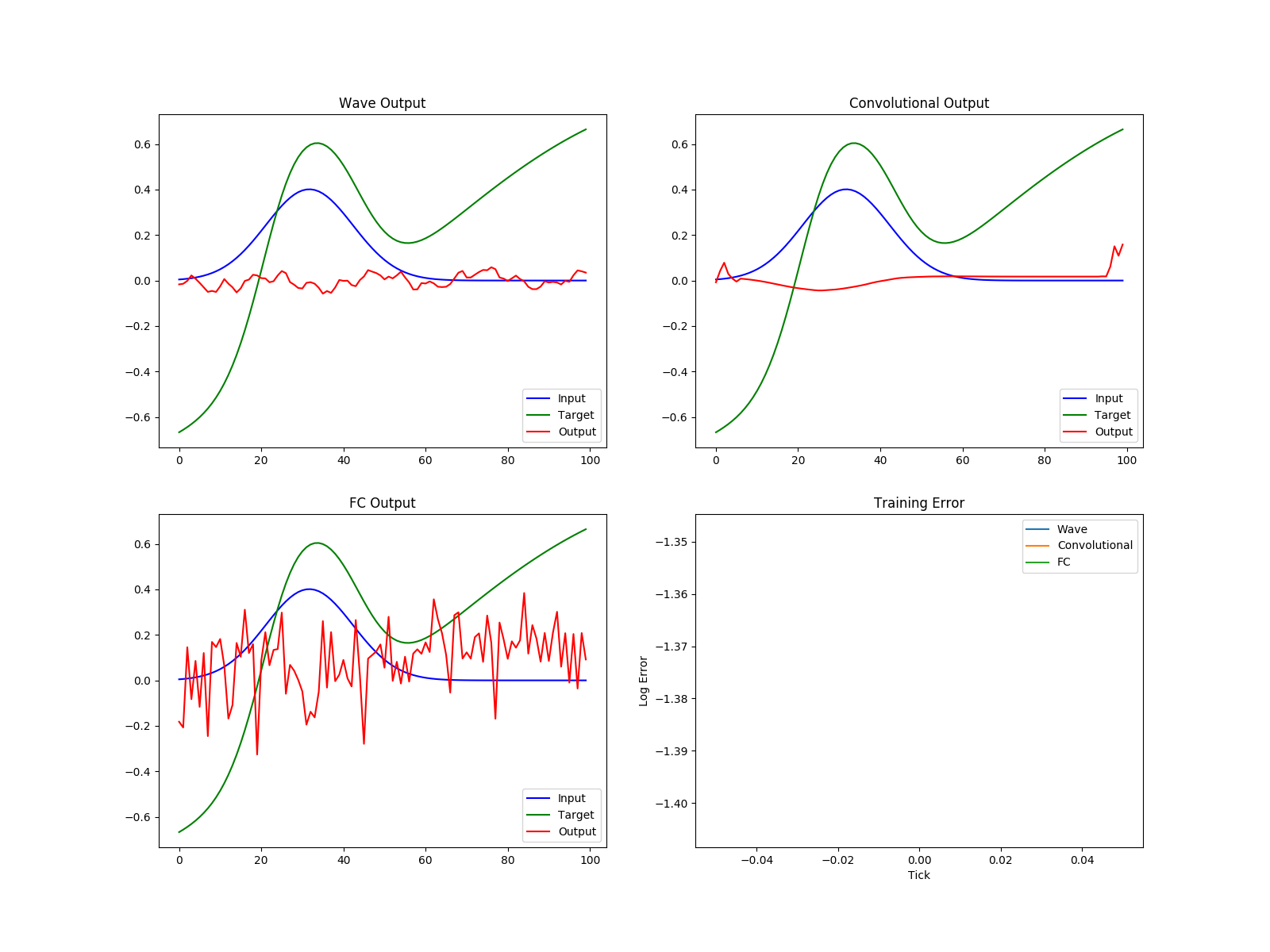}
  \end{center}
  
  \begin{center}
  \includegraphics[width=0.9\textwidth]{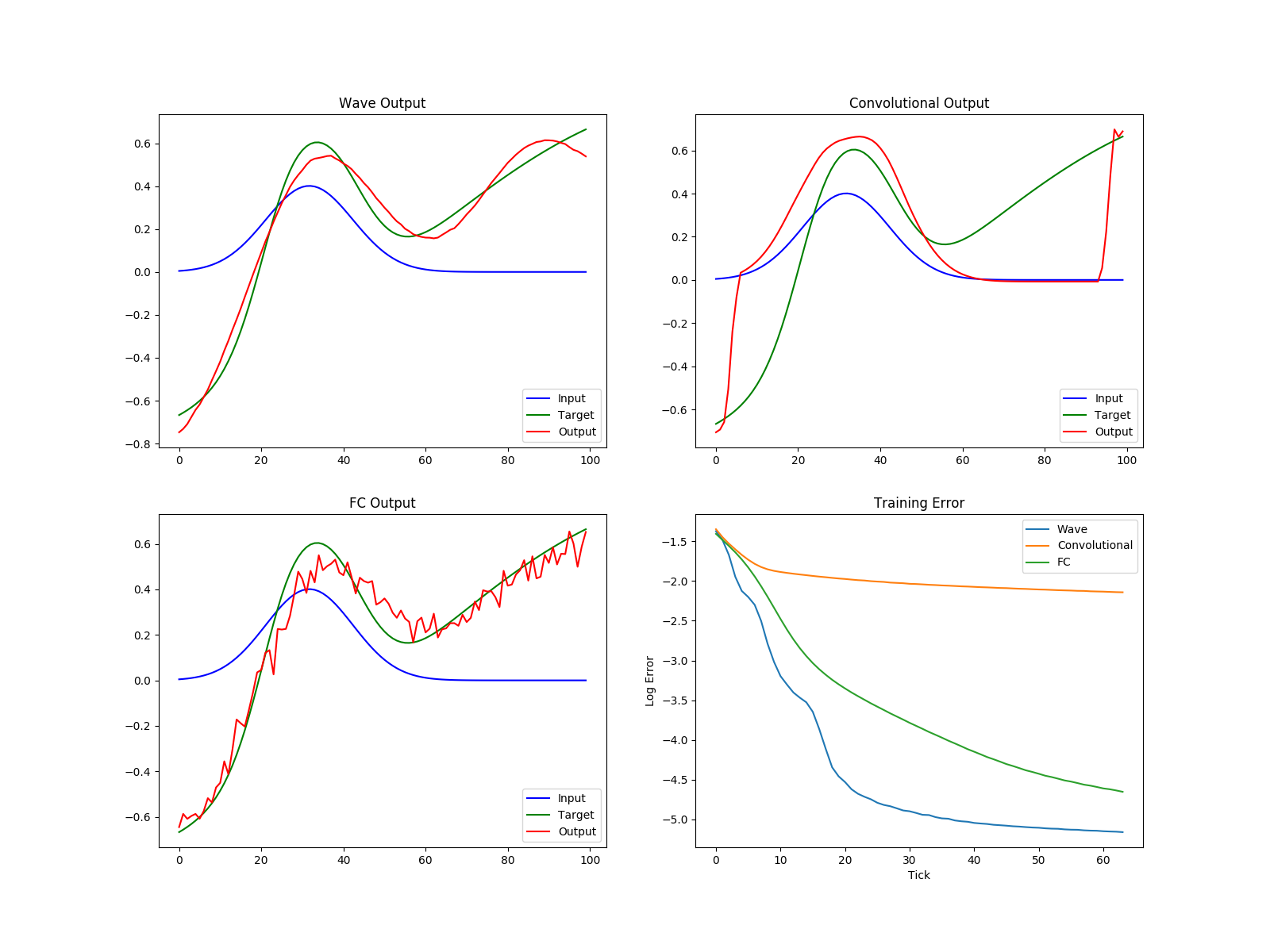}

  \includegraphics[width=0.9\textwidth]{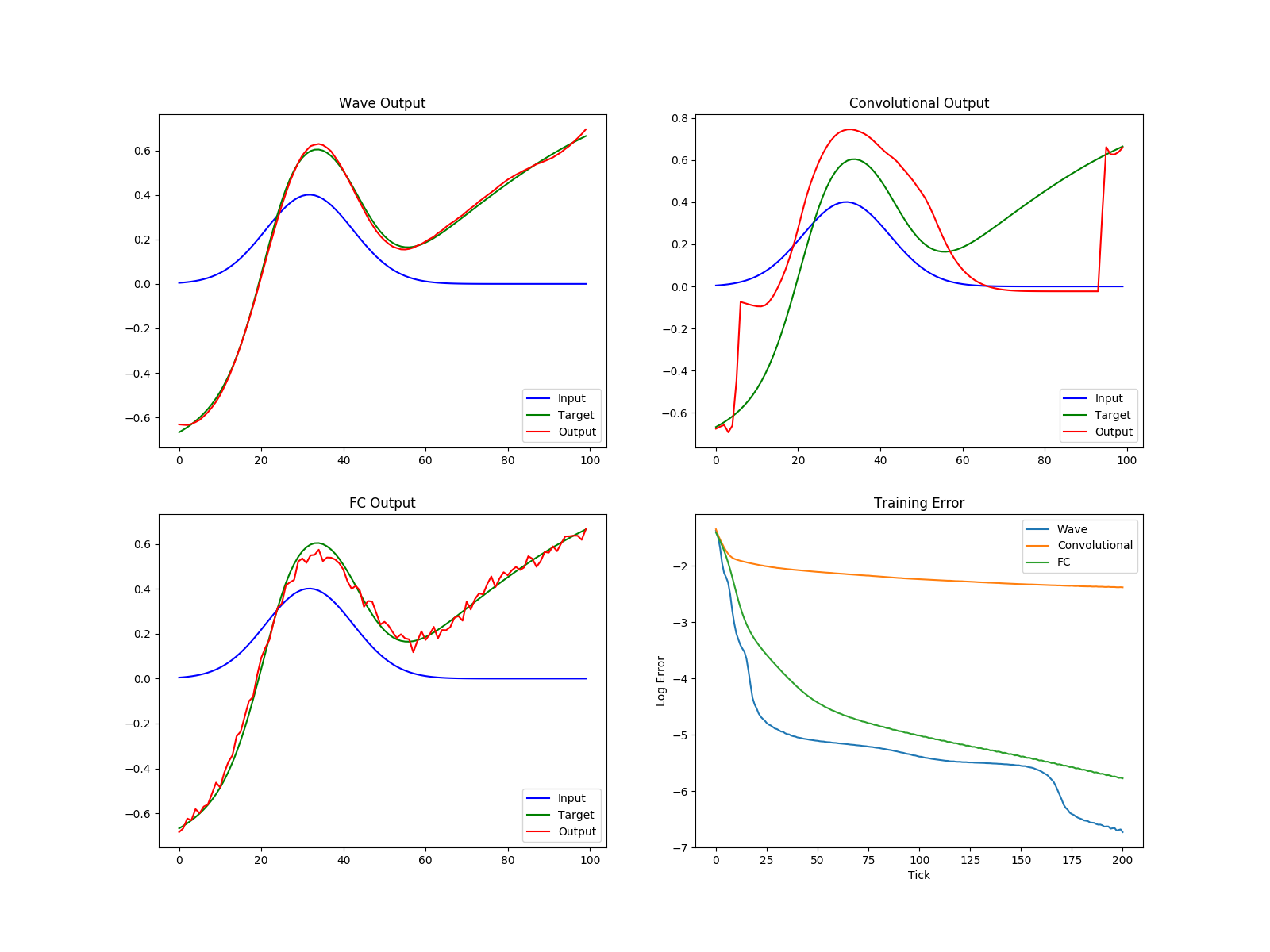}

  \end{center}

  The results of this toy demonstration illustrate the intermediate flexibility of WaveLayers between purely fully connected and convolutional architectures. Satisfying the same differential equation \eqref{eq:hyperbolic}, convolutional and WaveLayer architectures are regularized by spatial locality, but WaveLayers can in fact go beyond convolution layers and employ translational hetrogeneity. Although the purpose of this work is not to demonstrate the superiority of either convoluitional or WaveLayer architectures, it does open a new avenue of exploration in neural architecture design using DFMs to design layer types using topological constraints.

\section{Appendix B: Theorems and Proofs}

\subsection{Point Approximation}
\begin{theorem}
Let $[a,b] \subset \mathbb{R}$ be a bounded interval and $g: \mathbb{R} \to B \subset \mathbb{R}$ be a continuous, bijective activation function. Then if $\xi: E_\ell \to \mathbb{R}$  and $ f: E_{\ell}' \to B$ are $L^1(\mu)$ integrable functions there exists a unique class of $\mathfrak{o}$-operational layers such that $g \circ \mathfrak{o}[\xi] = f.$
\end{theorem}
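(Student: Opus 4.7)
The plan is to reduce the problem to a direct application of the Fundamental Theorem of Calculus once the bijectivity of $g$ is exploited. First, because $g$ is a continuous bijection onto $B$ and $f(v)\in B$ for every $v$, the identity $g\circ\mathfrak{o}[\xi]=f$ is pointwise equivalent to
\begin{equation*}
\int_{E_\ell} \omega_\ell(u,v)\,\xi(u)\,d\mu(u)\;=\;c(v)\;:=\;g^{-1}(f(v)), \qquad v\in E_{\ell'}.
\end{equation*}
For each fixed $v$ this is a single linear constraint on the $u$-section $\omega_\ell(\cdot,v)$, so the set of valid kernels is an affine subspace and the ``unique class'' in the statement should be read as the coset $\omega_\ell^{\star}+\ker L_\xi$, where $L_\xi$ is the linear functional $\psi(\cdot,v)\mapsto \int_{E_\ell}\psi(u,v)\xi(u)\,d\mu(u)$.

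Next I would exhibit one explicit representative $\omega_\ell^{\star}$. Take $E_\ell=[a,b]$ (the general compact case reduces by slicing), let $\Xi(u)=\int_a^u \xi(s)\,d\mu(s)$, and pick any absolutely continuous auxiliary $h(x,v)$ satisfying $h(\Xi(b),v)-h(\Xi(a),v)=c(v)$; the affine ansatz $h(x,v)=\tfrac{x-\Xi(a)}{\Xi(b)-\Xi(a)}\,c(v)$ works whenever $\int_a^b\xi\,d\mu\neq 0$. Then set
\begin{equation*}
\omega_\ell^{\star}(u,v)\;:=\;\partial_x h(\Xi(u),v).
\end{equation*}
The chain rule gives $\omega_\ell^{\star}(u,v)\,\xi(u)=\tfrac{d}{du}h(\Xi(u),v)$, and FTC then yields $\int_a^b \omega_\ell^{\star}(u,v)\,\xi(u)\,du=c(v)$, which in turn gives $g\circ\mathfrak{o}[\xi]=f$. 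Parameterising through $\Xi$ rather than writing $\omega_\ell=\partial_u H/\xi$ is precisely what allows us to avoid dividing by $\xi$, so the construction tolerates $\xi$ vanishing on a set of positive measure; integrability of $\omega_\ell^{\star}$ follows from boundedness of $\partial_x h$ on the compact range $\Xi([a,b])$.

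The step I expect to require the most care is not existence but pinning down the meaning of ``unique class''. Existence reduces to one line of FTC once the change of variable $x=\Xi(u)$ is performed, and the coset description drops out of the linearity of $L_\xi$. What needs attention is (i) verifying that every element of $\omega_\ell^{\star}+\ker L_\xi$ yields the same layer output, which is immediate from the definition of $\ker L_\xi$, and (ii) singling out a canonical representative inside the class, for which a boundary-flux normalisation like $\partial_x h(\Xi(u),v)\,\xi(u)\big|_{u\in\{a,b\}}=0$ removes the residual gauge freedom in $h$ and isolates the particular representative highlighted by the author. The degenerate case $\int_a^b \xi\,d\mu=0$ must also be handled, either by choosing a nonlinear $h$ with the correct boundary jump or by imposing a mild nondegeneracy assumption on $\xi$.
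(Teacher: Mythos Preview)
Your argument is correct and essentially coincides with the paper's: both build the kernel from an auxiliary $h(\Xi(u),v)$ satisfying a boundary-jump condition and then appeal to the chain rule plus the Fundamental Theorem of Calculus, and you even recover the paper's normalisation $\partial_x h(\Xi(u),v)\,\xi(u)\big|_{u\in\{a,b\}}=0$. The only cosmetic difference is that the paper folds the inverse activation into the kernel via an extra factor $(g^{-1})'\!\big(h(\Xi(u),v)\big)$ and imposes the jump $h(\Xi(b),v)-h(\Xi(a),v)=f(v)$, whereas you more cleanly apply $g^{-1}$ to $f$ at the outset; your explicit coset reading of the phrase ``unique class'' is also sharper than what the paper supplies.
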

\begin{proof}
We will give an exact formula for the weight function $\omega_\ell$ cooresponding to $\mathfrak{o}$ so that the formula is true. Recall that
\begin{equation}
  y^{\ell'}(v) = g\left(\int_{E_\ell}   \xi(u) \omega_\ell(u,v) \ d\mu(u)\right).
\end{equation}
Then let $\omega_\ell(u,v) = \left[(g^{-1})' \circ\left(h(\Xi(u),v)\right)\right]h'(\Xi(u),v)$ where $\Xi(u)$ is the indefinite integral of $\xi$ and $h: \mathbb{R} \times E_{\ell'} \to \mathbb{R}$ is some jointly and seperately integrable function. By the bijectivity of $g$ onto its codomain, $\omega_\ell$ exists. Now further specify $h$ so that, $h(\Xi(u),v)\Big|_{u \in E_\ell} = f(v)$.
 Then by the fundamental theorem of (Lebesgue) calculus and chain rule,
\begin{equation}
    \begin{aligned}
    g(\mathfrak{o}[\xi](v)) &=g\left(\int_{E_\ell}   \left[(g^{-1})' \circ\left(h(\Xi(u),v)\right)\right]h'(\Xi(u),v) \xi(u) \ d\mu(u)\right) \\
    &= g\left(g^{-1}\left(h(\Xi(u),v)\right)\right)\Bigg|_{u \in E_\ell} \\
    &= f(v)
    \end{aligned}
\end{equation}
A generalization of this theorem to $E_\ell \subset \mathbb{R}^n$ is given by Stokes theorem.
\end{proof}

\subsection{Density in Linear Operators}

\begin{theorem}[Approximation of Linear Operators]
Suppose $E_\ell, E_{\ell'}$ are $\sigma$-compact, locally compact, measurable, Hausdorff spaces. If $K: C(E_\ell) \to C(E_\ell')$ is a bounded linear operator then there exists an $\mathfrak{o}$-operational layer such that for all $y^\ell \in C(E_\ell)$, $\mathfrak{o}[y^\ell] = K[y^\ell].$
\end{theorem}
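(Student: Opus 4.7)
The plan is to leverage the Riesz--Markov--Kakutani representation theorem to identify $K$ pointwise with a family of measures indexed by $v \in E_{\ell'}$, and then to realize this family as a single kernel function on the product space $E_\ell \times E_{\ell'}$.

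First, I would fix $v \in E_{\ell'}$ and observe that the evaluation map $\phi_v: y \mapsto K[y](v)$ is a bounded linear functional on $C(E_\ell)$ --- bounded because pointwise evaluation is bounded on $C(E_{\ell'})$ and $K$ itself is assumed bounded. Since $E_\ell$ is a $\sigma$-compact, locally compact Hausdorff space, the Riesz--Markov--Kakutani theorem produces a unique regular Borel (Radon) measure $\mu_v$ on $E_\ell$ such that
\begin{equation}
\phi_v(y) \;=\; K[y](v) \;=\; \int_{E_\ell} y(u)\,d\mu_v(u), \qquad \forall\, y \in C(E_\ell).
\end{equation}

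Second, I would study how $\mu_v$ varies with $v$. For each fixed $y \in C(E_\ell)$, continuity of $K[y] \in C(E_{\ell'})$ forces $v \mapsto \mu_v$ to be weak-$*$ continuous as a map into $C(E_\ell)^*$. Picking a fixed reference Radon measure $\mu$ on $E_\ell$ (guaranteed by $\sigma$-compactness and local compactness), I would then invoke the Radon--Nikodym theorem to define a density $\omega_\ell(\cdot, v) := d\mu_v/d\mu$ on $E_\ell$ for each $v$, and glue these densities into a jointly measurable kernel $\omega_\ell : E_\ell \times E_{\ell'} \to \mathbb{R}$ by combining weak-$*$ continuity in $v$ with a standard Fubini/monotone class argument. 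Once joint measurability and local integrability are established, the identity $K[y](v) = \int_{E_\ell} y(u)\,\omega_\ell(u,v)\,d\mu(u)$ exhibits the desired $\mathfrak{o}$-operational layer.

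The main obstacle is precisely the Radon--Nikodym step: not every bounded linear operator $C(E_\ell) \to C(E_{\ell'})$ has representing measures $\mu_v$ absolutely continuous with respect to a common reference $\mu$. The identity operator is the canonical counterexample, as its representing measures are Dirac masses and demand a distributional kernel. To make the argument rigorous one must therefore either (i) widen the admissible class of kernels to include distributions in the spirit of the Schwartz kernel theorem, interpreting the integral in \eqref{eq:tloperational} as a duality pairing, or (ii) restrict attention to operators $K$ whose family $\{\mu_v\}$ is uniformly absolutely continuous with respect to $\mu$, which is the implicit regularity assumption under which the piecewise-integrable $\omega_\ell$ of the $\mathfrak{o}$-operational definition actually exists. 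Under either reading, the Riesz--Markov--Kakutani step supplies the kernel and the remaining verification is a standard measurability and Fubini argument.
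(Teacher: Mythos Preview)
Your approach mirrors the paper's almost exactly: fix $v\in E_{\ell'}$, apply Riesz--Markov--Kakutani to the evaluation functional $\zeta_v\circ K$, establish weak-$*$ continuity of the resulting map $v\mapsto\mu_v$ into $C(E_\ell)^*$, and then pass to a kernel via Radon--Nikodym against a reference measure. The one place where the paper goes further than you is in the construction of that reference measure. Rather than fixing an arbitrary $\mu$ and hoping for absolute continuity, the paper argues that weak-$*$ continuity of $t\mapsto\mu_t$ makes $t\mapsto\|\mu_t\|$ continuous, then on each piece $\overline{U_n}$ of a $\sigma$-compact exhaustion of $E_{\ell'}$ selects the point $r=t(n)$ where $\|\mu_r\|$ is maximal, and sets $\nu=\sum_n \chi_{U_n\setminus U_{n-1}}\,\mu_{t(n)}$. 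The paper then asserts $\mu_t\ll\nu$ for all $t$ and invokes Radon--Nikodym.

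Your instinct that the Radon--Nikodym step is the real obstruction is well placed, and in fact the counterexample you raise applies equally to the paper's construction. The paper's key inference is that $\|\mu_t\|\le\|\mu_r\|$ implies $\mu_t\ll\mu_r$, which is simply false: for the identity operator every $\mu_t=\delta_t$ has norm $1$, so any $\delta_r$ is ``maximal'' on a compactum, yet $\delta_t\not\ll\delta_r$ when $t\neq r$. Thus you and the paper share the same strategy and the same gap; you have merely been more candid in flagging it. Of the two resolutions you propose, widening the kernel class to distributions (your option (i)) is the honest one, and is effectively what the paper already does elsewhere when it uses Dirac deltas as weight kernels in the proof of Theorem~\ref{thm:o_univ}.
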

\begin{proof}
  Let $\zeta_t :C(E_{\ell'})\to \mathbb{R}$ be a linear form which evaluates its arguments at $t\in E_{\ell'}$; that is, $\zeta_t(f) = f(t)$.  Then because $\zeta_t$ is bounded on its domain, $\zeta_t\circ K = K^\star\zeta_t: C(E_{\ell}) \to \mathbb{R}$ is a bounded linear functional. Then from the Riesz Representation Theorem we have that there is a unique regular Borel measure $\mu_t$ on $E_\ell$ such that 
\begin{equation}
\begin{aligned}
    \left(Ky^\ell\right)(t) = K^\star \zeta_t\left(y^\ell\right) &= \int_{E_\ell} y^\ell(s)\ d\mu_t(s), \\
    \|\mu_t\| &= \|K^\star \zeta_t\| 
\end{aligned}
\end{equation}

We will show that $\kappa: t \mapsto K^\star \zeta_t$ is continuous. Take an open neighborhood of $K^\star \zeta_t$, say $V \subset [C(E_{\ell})]^*$, in the weak* topology. Recall that the weak* topology endows $[C(E_{\ell})]^*$ with smallest collection of open sets so that maps in $i({C(E_\ell)}) \subset [C(E_{\ell})]^{**}$ are continuous where $i: C(E_{\ell}) \to [C(E_{\ell})]^{**}$ so that $i(f) = \hat{f} = \phi \mapsto \phi(f), \phi \in [C(E_{\ell})]^*$. Then without loss of generality 
$$V = \bigcap_{n=1}^m \hat{f}_{\alpha_n}^{-1}(U_{\alpha_n})$$
where $f_{\alpha_n} \in C(E_\ell)$ and $U_{\alpha_n}$ are open in $\mathbb{R}.$ Now $\kappa^{-1}(V) = W$ is such that if $t \in W$ then $K^\star \zeta_t \in \bigcap_1^m \hat{f}_{\alpha_n}^{-1}(U_{\alpha_n})$. Therefore
for all $f_{\alpha_n}$ then $K^* \zeta_t(f_{\alpha_n}) = \zeta_t(K[f_{\alpha_n}]) = K[f_{\alpha_n}](t) \in U_{\alpha_n}.$ 

We would like to show that there is an open neighborhood of $t$, say $D$, so that $D \subset W$ and $\kappa(Z) \subset V$. First since all the maps $K[f_{\alpha_n}]: E_{\ell'} \to \mathbb{R}$ are continuous let $D = \bigcap_1^m (K[f_{\alpha_n}])^{-1}(U_{\alpha_n}) \subset E_{\ell'}$. Then if $r \in D$, $\hat{f}_{\alpha_n}[ K^\star \zeta_r] = K[f_{\alpha_n}](r) \in U_{\alpha_n}$ for all $1 \leq n \leq m$. Therefore $\kappa(r) \in V$ and so $\kappa(D) \subset V$.

As the norm $\|  \cdot \|_*$ is continuous on $[C(E_{\ell})]^*$, and $\kappa$ is continuous on $E_{\ell'}$, the map $t \mapsto \|\kappa(t)\|$ is continuous. In particular, for any compact subset of $E_{\ell'}$, say $F$, there is an $r \in F$ so that $\|\kappa(r)\|$ is maximal on $F$; that is, for all $t \in F$, $\|\mu_t\| \leq \|\mu_r\|.$ Thus $\mu_t \ll \mu_r.$

Now we must construct a borel regular measure $\nu$ such that for all $t \in E_{\ell'},$ $\mu_t \ll \nu$. To do so, we will decompose
$E_{\ell'}$ into a union of infinitely many compacta on which there is a maximal measure. Since $E_{\ell'}$ is a $\sigma$-compact locally compact Hausdorff space we can form a union $E_{\ell'} = \bigcup_1^\infty U_n$ of precompacts $U_n$ with the property that $U_n \subset U_{n+1}.$ For each $n$ define $\nu_n$ so that $\chi_{U_{n} \setminus U_{n-1}}\mu_{t(n)}$ where $\mu_{t(n)}$ is the maximal measure on each compact $cl(U_n)$ as described in the above paragraph. Finally let $\nu = \sum_{n=1}^\infty \nu_n.$ Clearly $\nu$ is a measure since every $\nu_n$ is mutually singular with $\nu_m$ when $n \neq m$. Additionally for all $t \in E_{\ell'}$, $\mu_t \ll \nu$.

Next by the Lebesgue-Radon-Nikodym theorem, for every $t$ there is an $L^1(\nu)$ function $K_t$ so that $\ d\mu_t(s) = K_t(s)\ d\nu(s)$. Thus it follows that
\begin{equation}
\begin{aligned}
K\left[y^\ell\right](t) &= \int_{E_\ell} y^\ell(s)K_t(s)\ d\nu(s) \\
&= \int_{E_\ell} y^\ell(s)K(t,s)\ d\nu(s) = \mathfrak{o}[y^\ell](t).
\end{aligned}
\end{equation}
By letting $\omega_\ell = K$ we then have $K = \mathfrak{o}$ up to a $\nu$-null set and this completes the proof.   
\end{proof}

\subsection{Density in Non-Linear operators}

\begin{theorem}
  Suppose that $E_1, E_2$ are bounded intervals in $\mathbb{R}$. If $K: \text{Lip}_\lambda(E_1) \to \text{Lip}_\kappa(E_3)$ is a uniformly continuous, nonlinear operator. Then for every $\epsilon > 0$ there exists a deep function machine 
  \begin{equation}
  \begin{tikzcd}
    \scriptd: \boxed{L^1(E_1)} \arrow{r}{\mathfrak{o}} &\boxed{L^1(E_{2})} \arrow{r}{\mathfrak{o}}& \boxed{L^1(E_3)}
  \end{tikzcd}
\end{equation}
such that $\left\|\mathcal{ D}|_{\text{Lip}_\lambda} - K\right\| < \epsilon.$
\end{theorem}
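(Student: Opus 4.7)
The plan is to factor $K$ through a finite-dimensional discrete neural network using piecewise-constant projections on input and output, following the factorization principle sketched after Theorem \ref{thm:basis}. First, fix equally spaced partitions of $E_1$, $E_2$, $E_3$ into $N$, $H$, $M$ sub-intervals respectively, with widths $h_u, h_v, h_w$. Let $P_N$ denote piecewise-constant projection on $E_1$ and $Q_M$ the analogous projection on $E_3$. For $\xi \in \text{Lip}_\lambda(E_1)$ the Lipschitz bound gives $\|P_N\xi - \xi\|_\infty \le \lambda h_u$, and since $K\xi \in \text{Lip}_\kappa(E_3)$ also $\|Q_M K\xi - K\xi\|_\infty \le \kappa h_w$; both errors vanish as $N, M \to \infty$.

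Next, identify the affine projection $|K| := Q_M \circ K \circ P_N$ with a lattice map $\tilde K : \Lambda_N \to \mathbb{R}^M$, where $\Lambda_N \subset \mathbb{R}^N$ is the compact set of sample vectors arising from (uniformly bounded) $\text{Lip}_\lambda$ functions, compactness following from Arzel\`a--Ascoli. Uniform continuity of $K$ on $\text{Lip}_\lambda$ transfers to continuity of $\tilde K$ on the compact set $\Lambda_N$, so the classical universal approximation theorem of \cite{univapprox} yields a two-hidden-layer discrete network $\scriptn : \mathbb{R}^N \to \mathbb{R}^M$ with $\sup_{x \in \Lambda_N}\|\scriptn(x) - \tilde K(x)\|_\infty < \epsilon/3$.

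Then lift $\scriptn$ to a two-layer DFM $\scriptd$ of the prescribed shape by choosing piecewise-constant weight kernels $\omega_1, \omega_2$ whose block values are the weight-matrix entries of $\scriptn$ rescaled by the partition widths $h_u, h_v$. A direct computation verifies that for every piecewise-constant input $P_N\xi$, the integral in \eqref{eq:tloperational} reproduces the matrix-vector products of $\scriptn$ on each sub-interval; hence $\scriptd(P_N\xi)$ is the piecewise-constant lift on $E_3$ of $\scriptn$ applied to the samples of $\xi$. Combining the three error sources---input quantization ($\lambda h_u$), discrete approximation ($\epsilon/3$), and output lifting ($\kappa h_w$)---together with Lipschitz continuity of the activation $g$ and uniform continuity of $K$, gives $\|\scriptd\xi - K\xi\|_\infty < \epsilon$ uniformly on $\text{Lip}_\lambda$ after choosing $N, H, M$ large enough.

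The main obstacle is ensuring that $\scriptd$ extends continuously from its piecewise-constant inputs $P_N\xi$ to arbitrary $\xi \in \text{Lip}_\lambda$, so that the input-side quantization error $\lambda h_u$ propagates through two $\mathfrak{o}$-layers into a controllable output-side error. This requires a uniform $L^1$-to-$L^\infty$ estimate for the first $\mathfrak{o}$-layer (via integrability of $\omega_1$) followed by Lipschitz continuity of $g$, and the argument only closes because $K$ is assumed uniformly continuous rather than merely continuous. A secondary subtlety is choosing the partition widths in the right order: the universal approximation step must be performed \emph{after} $N, H, M$ are fixed, since $\tilde K$ itself depends on the partitions.
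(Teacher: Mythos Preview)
Your overall strategy matches the paper's exactly: quantize $K$ via projection/lifting maps, approximate the resulting finite-dimensional lattice map $\tilde K$ by a discrete network $\scriptn$ using Cybenko, then realize $\scriptn$ as a two-layer $\mathfrak{o}$-operational DFM. The error decomposition into input quantization, discrete approximation, and output lifting is also the same.

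There is, however, a genuine gap in your choice of \emph{piecewise-constant} projection $P_N$. The operator $K$ is only given on $\text{Lip}_\lambda(E_1)$, and $P_N\xi$ has jump discontinuities, so $P_N\xi \notin \text{Lip}_\lambda(E_1)$ and the expression $K(P_N\xi)$ in your definition $|K| := Q_M \circ K \circ P_N$ is undefined. You cannot simply extend $K$ by uniform continuity either, since that would require control of the codomain (you later need $K(P_N\xi)\in\text{Lip}_\kappa$ for the output-side bound), and no such control survives the extension. The paper handles this by using piecewise-\emph{affine} lifts $\rho_P^*$: for a vector $v=\rho_P(f)$ sampled from $f\in\text{Lip}_\lambda$, the linear interpolant $\rho_P^*(v)$ has slope at most $\lambda$ on each subinterval, so $\rho_P^*\circ\rho_P(f)\in\text{Lip}_\lambda$ and $K$ applies legitimately. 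This is the content of Lemma~\ref{lem:strong_approx}.

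A second, more stylistic, divergence is your kernel construction. You build $\omega_1,\omega_2$ as genuine piecewise-constant $L^1$ kernels (rescaled blocks of $W^1,W^2$), which forces you to worry about the ``extension'' from $P_N\xi$ to $\xi$. The paper instead places Dirac deltas $\delta(u-p_i)$ in $\omega_1$, so that the first $\mathfrak{o}$-layer literally samples $\xi$ at the partition points; this makes $\scriptd = \rho_Q^*\circ\scriptn\circ\rho_P$ hold \emph{exactly} for all $\xi$, and the extension issue never arises. Your approach is arguably more honest to the definition of an $\mathfrak{o}$-operational layer (deltas are not integrable functions), and once you replace $P_N$ by the affine interpolant the extension argument you sketch does go through, since averaging a Lipschitz function over a subinterval differs from its value at any interior point by at most $\lambda h_u$.
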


We will first introduce some defintions which quantize uniformly continuous operators on function space.

\begin{definition}
  Let $P = p_1 < \cdots < p_N$ be some partition of a compact interval $E$ with $N$ components. We call $\rho_P: \text{Lip}_*(E) \to \mathbb{R}^N$ and $\rho_P^*: \mathbb{R}^M \to \text{Lip}_*(E)$ affine projection maps if 
  \begin{equation}
  \begin{aligned}
    \rho_P(f) &= \left(f(p_i)\right)_{i=1}^N\\ 
    \rho_P^*(v)&= v \mapsto \sum_{i = 0}^{N-1}  \chi_{P_i}(x)\left[\frac{  (v_{i+1}- v_i)}{\mu(P_i)}(t - p_i) +v_i\right]
  \end{aligned}
\end{equation}
where $\chi_{P_i}$ is the indicator function on $P_i = [p_i,p_{i+1})$ when $i < N$ and $P_{N-1} = [p_{N-1}, p_{N}].$
\end{definition}

\begin{definition}
 Let $P,Q$ be partitions of $E_1, E_3$ of $N,M$ components respectively. If $K: \text{Lip}_\lambda(E_1) \to \text{Lip}_\kappa(E_3)$, its affine projection, $|K|$, and its lattice map, $\tilde K$, are defined so that the following diagram commutes,
  \begin{equation*}
  \begin{tikzcd}
    \text{Lip}_\lambda(E_1) \arrow{d}{|K|} \arrow{r}{\rho_P} & \mathbb{R}^N \arrow{d}{\tilde K} \arrow{r}{\rho_P^*} & \text{Lip}_\lambda(E_1) \arrow{d}{K} \\
    \text{Lip}_\kappa(E_3) & \mathbb{R}^M \arrow{l}{\rho_Q^*} &  \text{Lip}_\kappa(E_3)\arrow{l}{\rho_Q}
  \end{tikzcd}
\end{equation*}
\end{definition}

\begin{lemma}[Strong Linear Approximation]\label{lem:strong_approx}
  If $K: \text{Lip}_\lambda(E_1) \to \text{Lip}_\kappa(E_3)$ is a uniformly continuous, nonlinear operator, then for every $\epsilon > 0$ there exist
  partitions $P,Q$ of $E_1, E_3$ so that $\|K - |K|\|  < \epsilon.$
\end{lemma}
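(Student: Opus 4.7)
The plan is to exploit the factorization of $|K|$ obtained by chasing around the commutative diagram: composing the right-hand identity $\tilde K = \rho_Q \circ K \circ \rho_P^*$ with the left-hand identity $|K| = \rho_Q^* \circ \tilde K \circ \rho_P$ yields
\[
|K| \;=\; (\rho_Q^* \rho_Q)\circ K \circ (\rho_P^* \rho_P).
\]
Thus $|K|$ differs from $K$ only by the insertion of the piecewise-linear interpolation operator $\rho_P^* \rho_P$ on the input side and $\rho_Q^* \rho_Q$ on the output side, and the entire task reduces to showing that each of these two operators can be made uniformly close to the identity on its relevant Lipschitz class by refining the partition.

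The elementary ingredient I would establish first is an interpolation estimate. If $f \in \text{Lip}_\lambda(E)$ and $L := \rho_P^* \rho_P f$ is the piecewise-linear interpolant at the nodes of a partition $P$ with mesh $\delta_P := \max_i(p_{i+1}-p_i)$, then on each subinterval the slope of $L$ equals a finite difference of $f$ and is hence bounded in modulus by $\lambda$; so $L \in \text{Lip}_\lambda(E)$ as well, and a two-term triangle inequality on each piece yields $\|f - L\|_\infty \le \lambda\,\delta_P$. The same statement holds for $\rho_Q^* \rho_Q$ on $\text{Lip}_\kappa(E_3)$ with $\kappa$ in place of $\lambda$. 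Crucially this bound is uniform in $f$ over the whole Lipschitz class.

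Given $\epsilon > 0$, I would then proceed by the standard $\epsilon/2 + \epsilon/2$ split. First invoke uniform continuity of $K$ to obtain $\eta > 0$ with $\|f_1 - f_2\|_\infty < \eta \Rightarrow \|K f_1 - K f_2\|_\infty < \epsilon/2$ for all $f_1,f_2 \in \text{Lip}_\lambda(E_1)$. Choose $P$ with $\delta_P < \eta/\lambda$; by the previous paragraph $\rho_P^* \rho_P f$ lies again in $\text{Lip}_\lambda(E_1)$ and is within $\eta$ of $f$ in sup norm, so $\|Kf - K(\rho_P^* \rho_P f)\|_\infty < \epsilon/2$. Next, since $K(\rho_P^* \rho_P f) \in \text{Lip}_\kappa(E_3)$, pick $Q$ with $\delta_Q < \epsilon/(2\kappa)$ so that $\|K(\rho_P^* \rho_P f) - \rho_Q^* \rho_Q K(\rho_P^* \rho_P f)\|_\infty \le \kappa\,\delta_Q < \epsilon/2$. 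One final triangle inequality now yields $\|Kf - |K|f\|_\infty < \epsilon$ uniformly in $f \in \text{Lip}_\lambda(E_1)$, which is the claimed operator-norm bound.

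The delicate step, and the one I expect to require care, is the Lipschitz-preservation property $\rho_P^* \rho_P(\text{Lip}_\lambda) \subset \text{Lip}_\lambda$: without it one could not legitimately apply the modulus of continuity of $K$ at the point $\rho_P^* \rho_P f$, because that modulus is only asserted on the domain $\text{Lip}_\lambda(E_1)$. The slope bound of paragraph two handles this cleanly; after it, the rest is pure triangle-inequality bookkeeping, and the boundedness hypothesis on $E_1, E_3$ enters only to guarantee that the meshes $\delta_P, \delta_Q$ can be made as small as required with finitely many partition points.
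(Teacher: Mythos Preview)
Your proof is correct and follows essentially the same route as the paper: factor $|K| = (\rho_Q^*\rho_Q)\circ K\circ(\rho_P^*\rho_P)$, bound the input-side error via uniform continuity of $K$, bound the output-side error via the interpolation estimate on $\text{Lip}_\kappa$, and combine by the triangle inequality. The only cosmetic differences are that the paper works in the $L^1(\mu)$ norm rather than the sup norm (obtaining $\lambda\,\Delta p^2 N$ instead of your $\lambda\,\delta_P$), and that the paper does not explicitly verify the Lipschitz-preservation property $\rho_P^*\rho_P(\text{Lip}_\lambda)\subset\text{Lip}_\lambda$ that you correctly flag as essential for applying $K$ to the interpolant.
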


\begin{proof}[Proof]
  To show the lemma, we will chase the commutative diagram above by approximation.

  For any $\delta > 0$, we claim that there exists a $P$ such that for any $f \in \text{Lip}_\lambda(E_1)$, the affine projection approximates $f$; that is, $\|f - \rho_P^*\circ\rho_P \circ f\|_{L_1(\mu)} < \delta.$ To see this, take $P$ to be a uniform partition of $E_1$ with $\Delta p := \mu(P_i) < \frac{\delta}{\mu(E_1)\lambda}$. Then 
  \begin{equation*}
  \begin{aligned}
      \int |f -  \rho_P^*\circ\rho_P \circ f| \; d\mu &\leq   \sum_{i=1}^{N-1} \int_{P_i} \left|f(t) - \left[\frac{(f(p_i)- f(p_i))}{\mu(P_i)}(t - p_i) +f(p_i) \right]\;\right|d\mu(t) \\
      &\leq \sum_{i=1}^{N-1} \int_{P_i} \left| f(t)  - f(p_i) - \lambda(t - p_i) \right|\; d\mu(t) \\
      &\leq \sum_{i=1}^{N-1} \int_{P_i} 2 \lambda|t - p_i|\; d\mu(t) \leq \lambda \Delta p^2 N < \delta.\\
  \end{aligned}
    \end{equation*}  

    Now by the absolute continuity of $K$, for every $\epsilon >0$ there is a $\delta$ and therefore a partition $P$ of $E_1$ so that if $\|f - \rho_P^* \circ \rho_P \circ f\|_{L^1(\mu)} < \delta$ then $\|K[f] - K[ \rho_P^* \circ \rho_P \circ f]\|_{L^1(\mu)} < \epsilon/2$. Finally let $Q$ be a uniform partition of $E_3$ so that for every $\phi \in \text{Lip}_\kappa(E_3)$, $\|\phi -   \rho^*_Q \circ \rho_Q \circ \phi \|_{L^1(\mu)} < \epsilon/2$. It follows that for every $f \in \text{Lip}_\lambda(E_1)$
    \begin{equation*}
    \begin{aligned}
      \|K[f] - |K|[f]\|_{L_1(\mu)} &\leq \|K[f] - K \circ \rho^*_P \circ \rho_P[f]\| + \|K[f] - \rho^*_Q \circ \rho_Q \circ K[f]\| \\
      &< \frac{\epsilon}{2} + \frac{\epsilon}{2} = \epsilon.
     \end{aligned}  
     \end{equation*} 
     Therefore the affine projection of $K$ approximates $K$. This completes the proof.
\end{proof}

With the lemma given we will approximate nonlinear operators through an approximation of the affine approximation using $\mathfrak{n}$-discrete DFMs.

\begin{proof}[Proof of Theorem \ref{thm:o_univ}]
  Let $\epsilon > 0$ be given. By Lemma \ref{lem:strong_approx} there exist partitions, $P, Q$, so that $\|K - |K|\| < \epsilon/2$. The cooresponding lattice map $\tilde K: \mathbb{R}^N \to \mathbb{R}^M$ is therefore continuous. Since $E_1$ is a compact interval, the image  $\rho_P[\text{Lip}_\lambda(E_1)]$ is compact and homeomorphic to the unit hypercube $[0,1]^N$. By the universal approximation theorem of \cite{univapprox}, for every $\delta$, there exists a deep function machine
  \begin{equation*}
     \begin{tikzcd}
        \scriptn: \boxed{\mathbb{R}^N} \arrow{r}{\mathfrak{n}} 
    & \boxed{\mathbb{R}^J} \arrow{r}{\mathfrak{n}} 
     &\boxed{\mathbb{R}^M},
     \end{tikzcd}
  \end{equation*}
  so that $\|\tilde K - \scriptn\|_{\infty} < \delta.$ Then, the continuity of the affine projection maps implies that there exist $\delta$ such that $\| \rho_Q^* \circ  \scriptn \circ \rho_P - |K| \|  < \epsilon/2$. Therefore the induced operator on $\scriptn$ represents $K$; that is, $\| \rho_Q^* \circ  \scriptn \circ \rho_P - K \| < \epsilon$. 
  \

  Let $\scriptn$ be parameterized by $W^1 \in \mathbb{R}^{N\times J}$ and $W^2 \in \mathbb{R}^{J\times M}$. Let $S$ be any uniform partition of an $I = [0,1]$ with $J$ components. Then parameterize a deep function machine $\scriptd$ with weight kernels
  \begin{equation*}
  \begin{aligned}
  \omega_1(u,v) &= \sum_{i=1}^N \sum_{j=1}^J \chi_{S_j\times P_i}(u,v) W^1_{i,j} \delta(u - p_i), \\
  \omega_2(v,x) &= \sum_{k=1}^{M-1} \chi_{Q_k}(x) \sum_{j=1}^J \left[\frac{ W^2_{j,k+1 } - W^2_{j,k}}{\mu(Q_k) }(x - q_k) + W^2_{j,k}\right]\delta(v-s_j),
  \end{aligned}
  \end{equation*}
  where $\delta$ is the dirac delta function. We claim that $\scriptd = \rho_Q^* \circ \scriptn \circ \rho_P$. Performing routine computations, for any $f \in Lip_{\lambda}(E_1)$,
  \begin{equation*}
    \begin{aligned}
      \scriptd[f] &= T_2 \circ g \circ \left(\int_{E_1} f(u) \omega_1(u,v)\ d\mu(u)\right) \\
      &=   T_2 \circ g \circ \left(\int_{E_1}  \sum_{i=1}^N \sum_{j=1}^J \chi_{S_j\times P_i}(u,v) W^1_{i,j} f(u) \delta(u - p_i) \ d\mu(u)\right) \\
      &=   T_2 \circ g \circ \left(\sum_{j=1}^J \rho_P(f)^T W^1_{j}  \chi_{S_j}(v)  \right) := T_2 \circ g \circ h(v)
    \end{aligned}
  \end{equation*}
  Thus, $h$ is identical to the $j$th neuron of the first $\mathfrak{n}$-discrete layer in $\scriptn \circ \rho_P$ when $v = s_j$. Turning to $T_2$ in $\scriptd$, we get that
  \begin{equation*}
  \begin{aligned}
    \scriptd[f] &= \int_{I} \omega_2(v,x) g(h(v))\ d\mu(v)  \\
    &=  \sum_{k=1}^{M-1} \chi_{Q_k}(x) \sum_{j=1}^J \left[\frac{ W^2_{j,k+1 } - W^2_{j,k}}{\mu(Q_k) }(x - q_k) + W^2_{j,k}\right] g(h(s_j)) \\
    &= \sum_{k=1}^{M-1} \chi_{Q_k}(x)  \cdot g(\rho_P(f)^TW^1)^T \left[\frac{ W^2_{k+1 } - W^2_{k}}{\mu(Q_k) }(x - q_k) + W^2_{k} \right]  \\
    &= \rho^*_Q(g(\rho_P(f)^TW^1)^T W^2) = \rho_Q^* \circ \scriptn \circ \rho_P [f].
  \end{aligned}
  \end{equation*}
  Therefore $\|\scriptd|_{\text{Lip}_\lambda} - K \| < \epsilon$ and this completes the proof.

\end{proof}

We will now prove a similar theorem for $\mathfrak{d}$-defunctional layers.

\begin{theorem}[Nonlinear Basis Approximation]
Suppose $I, E_{2}, E_3$ are compact  intervals, and let $C^{\omega}(X)$ denote the set of analytic functions on $X$. If $B: I^n \to C^{\omega}(E_3)$ is a continuous basis map to analytic functions then for every $\epsilon > 0$ there exists a deep function machine 
\begin{equation}\label{eq:dfmbasis}
    \begin{tikzcd}
    \scriptd: \boxed{\mathbb{R}^n} \arrow{r}{\mathfrak{d}} &\boxed{L^1(E_{2})} \arrow{r}{\mathfrak{o}}& \boxed{L^1(E_{3})}
  \end{tikzcd}
  \end{equation}  
  such that $\|D|_{I^n} - B\| < \epsilon$ in the topolopgy of uniform convergence.
\end{theorem}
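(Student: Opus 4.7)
The plan mirrors the strategy of Theorem \ref{thm:o_univ}: factor the approximation through a classical discrete universal approximation, then realize the factorization by hand with defunctional and operational kernels.

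First, since $I^n$ is compact and $B: I^n \to C^{\omega}(E_3) \hookrightarrow C(E_3)$ is continuous in the uniform topology (the natural reading of the statement, since the conclusion is in uniform convergence), the image $B(I^n)$ is compact and hence equicontinuous in $C(E_3)$ by Arzel\`a--Ascoli. Given $\epsilon > 0$, equicontinuity lets me choose a uniform partition $Q = \{q_1 < \cdots < q_M\}$ of $E_3$ fine enough that the affine projection $|B| := \rho_Q^* \circ \rho_Q \circ B$ satisfies $\sup_{x \in I^n} \|B(x) - |B|(x)\|_\infty < \epsilon/2$. The analyticity of each $B(x)$ is not essential here --- equicontinuity already suffices --- though it makes the piecewise-linear interpolation error especially easy to quantify via derivative bounds.

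Second, the lattice map $\tilde B := \rho_Q \circ B : I^n \to \mathbb{R}^M$ is continuous on the compact cube $I^n$, so the classical theorem of \cite{univapprox} yields, for any $\delta > 0$, a two-layer $\mathfrak{n}$-discrete network $\scriptn(x) = W^2 g(W^1 x)$ with $W^1 \in \mathbb{R}^{n \times J}$ and $W^2 \in \mathbb{R}^{J \times M}$ such that $\sup_{x \in I^n} \|\scriptn(x) - \tilde B(x)\|_\infty < \delta$. Since $\rho_Q^*$ has operator norm one from $(\mathbb{R}^M, \|\cdot\|_\infty)$ to $(C(E_3), \|\cdot\|_\infty)$, choosing $\delta < \epsilon/2$ combines with the previous estimate to give $\sup_{x \in I^n} \|\rho_Q^*(\scriptn(x)) - B(x)\|_\infty < \epsilon$. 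It then remains to realize $x \mapsto \rho_Q^*(\scriptn(x))$ as a DFM of the form \eqref{eq:dfmbasis}. I would pick a partition $S_1,\ldots,S_J$ of $E_2$ into sets of positive measure and define
\begin{equation*}
  \omega_1(v) = \sum_{j=1}^J W^1_{\cdot, j}\, \chi_{S_j}(v), \qquad h_j(w) = \sum_{k=1}^{M-1} \chi_{Q_k}(w)\left[\frac{W^2_{j,k+1} - W^2_{j,k}}{\mu(Q_k)}(w - q_k) + W^2_{j,k}\right],
\end{equation*}
together with $\omega_2(v, w) = \sum_{j=1}^J \mu(S_j)^{-1} \chi_{S_j}(v)\, h_j(w)$. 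A direct calculation then confirms $\scriptd[x](w) = \sum_j g(W^1_{\cdot, j}^T x)\, h_j(w) = \rho_Q^*(\scriptn(x))(w)$ for every $x \in I^n$, yielding the required approximation.

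The main obstacle I anticipate is the equicontinuity step: one must verify that a single partition $Q$ controls the piecewise-linear interpolation error simultaneously over all of $I^n$, which hinges on interpreting continuity of $B$ with respect to the uniform topology on its analytic target. Everything downstream is bookkeeping analogous to the end of the proof of Theorem \ref{thm:o_univ}, but simplified by the fact that the DFM input already lives in the finite-dimensional space $\mathbb{R}^n$, so no Dirac-delta sampling kernel is required on the defunctional side --- plain indicator functions of the $S_j$ are enough to encode the hidden-layer weights $W^1$.
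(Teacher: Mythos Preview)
Your proof is correct, but it takes a genuinely different route from the paper's. The paper exploits the analyticity hypothesis directly: it factors $B$ through the polynomial basis of $C^{\omega}(E_3)$ as $B = \kappa \circ \Delta$, where $\Delta: \mathbb{R}^n \to \ell^1$ records Taylor coefficients and $\kappa$ sums the resulting power series. A compactness argument on $B[I^n]$ (covering by ``mollified'' sets $O\scriptp_N(\epsilon)$ of functions close to degree-$N$ polynomials) shows that a fixed truncation $\Delta^{(N)}$ suffices uniformly; the finite map $\psi_N: I^n \to \mathbb{R}^N$ is then approximated by a discrete network $\scriptn$, and the linear reconstruction $\kappa$ is absorbed into the $\mathfrak{o}$-layer via the separate bounded-linear-operator theorem.

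You instead bypass analyticity entirely: Arzel\`a--Ascoli on the compact image $B(I^n)$ gives equicontinuity, so a single piecewise-linear interpolation grid $Q$ works uniformly in $x$, and the lattice map $\rho_Q \circ B$ is approximated directly. Your argument is more elementary---it needs neither the polynomial decomposition nor the linear-operator approximation theorem as a black box---and in fact proves a stronger statement, since it only requires $B: I^n \to C(E_3)$ to be continuous. The paper's route, on the other hand, makes visible why the codomain was restricted to analytic functions and connects the DFM parameters to a concrete spectral (polynomial-coefficient) representation, at the cost of a longer chain of reductions. Your explicit kernel construction with indicators $\chi_{S_j}$ is also cleaner than the paper's, which invokes Dirac deltas in the operator-approximation proof and then omits the analogous details here.
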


\begin{proof}
Recall that the set of polynomials on $E_3$, $\scriptp$, are a basis for the vector space $C^\omega(E_3).$ Therefore the map $B$ has a decomposition through nmaots $\kappa, \Delta$ so that the following diagram commutes
\begin{equation*}
  \begin{tikzcd}
  & \ell^1(\mathbb{R}) \arrow{d}{\kappa} \\
    \mathbb{R}^n \arrow[dashed]{ru}{\Delta} \arrow{r}{B} & C^{\omega}(E_3)
  \end{tikzcd}
\end{equation*}
and $\kappa: (a_i)_{i=1}^\infty \mapsto \sum a_n g_n$ where $g_n$ is the mononomial of degree $n$. The existence of $\Delta$ can be verified through a composition of the direct product of basis projections in $C^{\omega}(E_3)$ and $B$. 

For each $m \in \mathbb{N}$ the projection image in the $m$th coordinate, $\pi_m[\Delta[\mathbb{R}^n]] = \mathbb{R}$ and so again the maos factor into a countable collection of maps $\left(\Delta_i : \mathbb{R}^n \to \mathbb{R}\right)^{\infty}_{i=1}$ so that $\prod_{i=1}^\infty \Delta_i = \Delta.$ We will approximate $B$ by approximations of $\kappa \circ \Delta$ via increasing products of $\Delta_i$.

Define the aforementioned increasing product map $\Delta^{(N)}$ as 
\begin{equation*}
  \Delta^{(N)} = \prod_{i=1}^N \Delta_i  \times \prod_{N+1}^\infty c_0 
  \end{equation*}  
  where $c_0$ is the constant map. Now with $\epsilon > 0$ given, we wish to show that there exists an $N$ so that $\|\kappa \circ \Delta^{(N)} - B\| < \epsilon$ in the topology of uniform convergence. 

  To see this let $\scriptp_N \subset \scriptp$ denote the set of polynomials of degree at most $n.$ Next we define a open 'mullification' of $\scriptp_N$. In particular let 
  \begin{equation*}
    O\scriptp_N(\epsilon) = \left\{f \in C^{\omega}(\epsilon)\mathrel{}\middle|\mathrel{}\ |f - g| < \epsilon, g \in \scriptp_N\right\}.
   \end{equation*} 
   It is clear that  $O\scriptp_{N_1}(\epsilon) \subset  O\scriptp_{N_2}(\epsilon)$ when $N_1 \leq N_2$ and furthermore by the density of $\scriptp$ in $C^{\omega}(E_3)$ we have that $\{O\scriptp_i(\epsilon)\}_{i=1}^\infty$ is an open cover of $B[I^n] \subset C^{\omega}(E_3).$ Since $I^n$ is compact $B[I^n]$ is a compact subset of $C^{\omega}(E_3)$ and thus there is a finite index set $I' = \{N_1, \dots N_k\}$ so that $\bigcup_{t \in I'} O\scriptp_t \supset B[I^n].$ If $N = \max I'$ then $O\scriptp_N(\epsilon) \supset B[I^n].$ Therefore for every $x \in I^n$ we have that $\|\kappa \circ \Delta^{(N)} - B\| < \epsilon$ since $\kappa \circ \Delta^{(N)}$ is a polynomial of degree at most $N.$

   Now we will filter the maps $\psi_N := \pi_{1\dots N} \circ \Delta^{(N)}$ where $\psi_N: \mathbb{R}^n  \to \mathbb{R}^N$ through the universal approximation theory of standard discrete neural networks. Let $\scriptn$ be a two $\mathfrak{n}$-discrete layer DFM so that $\|\scriptn - \psi_N\| < \epsilon$. For convienience let $\scriptn := \mathfrak{n}_2 \circ g \mathfrak{n}_1$ Then we can instantiate $\scriptn$ as the DFM in \eqref{eq:dfmbasis} using the same method as in the proof of the nonlinear operator approximation theory above. 

   For the $\mathfrak{d}$-defunctional layer let $W^1_{jk}$ be the weight tensor of $\mathfrak{n}_1$ in $\scriptn$ so that ${\mathfrak{n}_1(x)}^1_{j} = \sum_k W_{jk}^1 x_k$. Then let the weight kernel for $\mathfrak{d}$ be $\omega^k_1(v) = \rho^*(W_{\cdot k}^1)$. and then $\mathfrak{d}[x]|_{v = j} = {\mathfrak{n}_1(x)}^1_{j}.$ We will ommit the design of weight kernels for the $\mathfrak{o}$-operational layer, but this is not difficult to establish. All together we now have that via the approximation of $\scriptn$ and equivalence of $\scriptn$ and its instantiation in the statement of the theorem,
   \begin{equation*}
  \|\kappa \circ \rho \circ \mathfrak{o} \circ g \circ \mathfrak{d} - \kappa \circ \iota_N \circ \psi_N\| < \epsilon.  
   \end{equation*}

   Finally we need deal with the basis map $\kappa$. On the compact set $\Delta^{(N)}[I^n] = \iota_{N} \circ \psi_N[I^n]$, $\kappa$ is a bounded linear operator and its composition,  $\kappa \circ \rho \circ \mathfrak{o}$ is also a bounded linear operator. Therefore by the bounded linear approximation theorem of $\mathfrak{o}$-operational layers, there is a $\|\mathfrak{o}' - \kappa \circ \rho \circ \mathfrak{o}\| < \epsilon$. Appending such $\mathfrak{o}'$ to $\mathfrak{d}$ as above we achieve the approximation bound of the theorem. This completes the proof.
\end{proof}

\subsection{Resolution Invairance}

\begin{theorem}
  If $T_\ell$ is an $\mathfrak{o}$-operational layer with an integrable weight kernel $\omega(u,v)$ of $\scripto(1)$ parameters, then there is a unique $\mathfrak{n}$-discrete layer with with $\scripto(N)$ parameters so that $\mathfrak{o}[\xi](j) = \mathfrak{n}[x]_j$ for all indices $j$ and for all $\xi, x$ as above.
\end{theorem}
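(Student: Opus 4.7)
The plan is to explicitly construct the discrete layer $N_\ell$ from $\omega$ by discretizing on the sampling partition, show that equality holds by the definition of the Lebesgue integral against a simple function, and then derive uniqueness from the freedom to choose arbitrary inputs.

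First I would fix the sampling setup that implicitly underlies the statement. Let $P = \{P_1, \ldots, P_N\}$ be the partition of $E_\ell$ used to sample $\xi$, so that $x_i = \xi(u_i)$ for some representative $u_i \in P_i$, and let $\{v_1,\ldots,v_M\}$ be the evaluation points in $E_{\ell'}$. Following the derivation in Section 2, I would identify the sample $x$ with the simple function $\tilde\xi = \sum_{i=1}^N x_i \chi_{P_i}$, which is the natural piecewise constant interpolant.

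Next I would compute $T_\ell[\tilde\xi](v_j)$ directly. By linearity of the Lebesgue integral and the fact that $\tilde\xi$ is simple,
\begin{equation*}
T_\ell[\tilde\xi](v_j) = g\!\left(\int_{E_\ell} \tilde\xi(u)\,\omega(u,v_j)\,d\mu(u)\right) = g\!\left(\sum_{i=1}^{N} x_i \int_{P_i}\omega(u,v_j)\,d\mu(u)\right).
\end{equation*}
This motivates defining the weight matrix entry-wise by
\begin{equation*}
(W_\ell)_{ij} \;=\; \int_{P_i} \omega(u, v_j)\,d\mu(u),
\end{equation*}
which is well defined because $\omega$ is integrable. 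With this choice, $N_\ell(x)_j = g\bigl((W_\ell^T x)_j\bigr)$ equals $T_\ell[\tilde\xi](v_j)$ for every sample $x$ coming from some $\xi$. The parameter count is $NM = \scripto(N)$ in the resolution, versus the $\scripto(1)$ parameters used to specify $\omega$ in closed form (e.g.\ a fixed family $f(u,v;w)$).

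For uniqueness, I would note that if another fully connected layer with weight matrix $W'$ satisfied the same identity for every admissible $x$, then in particular it would agree for each standard basis vector $e_i$ (obtained by sampling a $\xi$ supported essentially on $P_i$); by the injectivity of $g$ implicit in the setup, the pre-activations must coincide, forcing $W' = W_\ell$ columnwise. The main obstacle is conceptual rather than technical: one must be careful that the equality $T_\ell[\xi](v_j) = N_\ell(x)_j$ is really an equality between $T_\ell$ applied to the sampling interpolant $\tilde\xi$ (not to the original $\xi$) and the discrete layer on $x$; once that identification is made, the rest is a direct computation with simple functions as in the motivating derivation of operator neural networks.
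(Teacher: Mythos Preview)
Your argument is correct, but it takes a different route from the paper's. The paper does \emph{not} use the piecewise-constant simple-function interpolant from Section~2; instead it takes $\xi$ to be the piecewise-\emph{linear} interpolant of the sample, $\xi(u)=(x_{n+1}-x_n)(u-n)+x_n$ on $[n,n+1]$, integrates against $\omega_\ell$ on each sub-interval, and then applies a telescoping rearrangement to collect the coefficient on each $x_n$. This produces the explicit formula $W_{n,j}=Q_n(j)-V_n(j)+V_{n-1}(j)$, where $Q_n(v)=\int_n^{n+1}\omega_\ell(u,v)\,d\mu$ and $V_n(v)=\int_n^{n+1}(u-n)\,\omega_\ell(u,v)\,d\mu$, with boundary cases for $n=1,N$. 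Your piecewise-constant choice gives the simpler $W_{ij}=\int_{P_i}\omega(u,v_j)\,d\mu$ directly, with no telescoping needed. Both are valid discretizations yielding an $N\times M$ weight matrix; your route is more elementary, while the paper's affine interpolant is consistent with the $\rho_P^*$ maps used in the proofs of Theorems~\ref{thm:o_univ} and~\ref{thm:basis}, so it ties the invariance result to the rest of the approximation machinery.

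One minor correction: in the paper's conventions $T_\ell=\mathfrak{o}$ is the \emph{pre}-activation linear map, and the activation $g$ is applied afterward at the node level. So the identity $\mathfrak{o}[\xi](j)=\mathfrak{n}[x]_j$ is already an equality of pre-activations, and your appeal to the injectivity of $g$ in the uniqueness step is unnecessary---varying $x$ over the standard basis recovers the columns of $W$ directly.
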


\begin{proof}
  Given some $\mathfrak{o}$, we will give a direct computation of the corresponding weight matrix of $\mathfrak{n}.$ It follows that
  \begin{equation} \label{eq:n2piecewisein}
  \begin{aligned}
    \mathfrak{o}[\xi](v) &= \int_{E_\ell}\xi(u)\;\omega_\ell(u,v)\;d\mu(u) \\ 
     &= \sum_n^{N-1}\int_n^{n+1} \left((x_{n+1} - x_n)(u - n) + x_n\right) \omega_\ell(u,v)\;d\mu(u)\\ 
      &= \sum_n^{N-1}(x_{n+1} - x_n) \int_n^{n+1} (u - n) \omega_\ell(u,v)\;d\mu(u)
      + x_n\int_n^{n+1} \omega_\ell(u,v)\;d\mu(u)\\
      \end{aligned}
      \end{equation}
      Now, let $V_{n}(v) = \int_n^{n+1} (u - n) \omega_\ell(u,v)\;d\mu(u)$ and $Q_{n}(v) = \int_n^{n+1} \omega_\ell(u,v)\;d\mu(u);$ We can now easily simplify \eqref{eq:n2piecewisein} using the telescoping trick of summation.
      \begin{equation}
      \label{eq:oton}
        \begin{aligned}
        \mathfrak{o}(\xi)[v] &= x_NV_{N-1}(v) + \sum_{n=2}^{N-1}x_n\left(Q_{n}(v) - V_{n}(v) + V_{n-1}(v)\right)+  x_1\left(Q_{1}(v) - V_1(v)\right) \\
        \end{aligned}
      \end{equation}
      Given indices in $j \in \{1, \cdots, M\}$, let $W \in \mathbb{R}^{N\times M}$ so that $W_{n,j} = (Q_{n}(j) - V_{n}(j) + V_{n-1}(j)$, $W_{N, j} = V_{N-1}(j),$ and $W_{1, j} = Q_{1}(j) - V_1(j)$. It follows that if $W$ parameterizes some $\mathfrak{n},$ then $\mathfrak{n}[x]_j = \mathfrak{o}[\xi](j)$ for every $f$ sampled/approximated by $x$ and $\xi$. Furthermore, $dim(W) \in O(N)$, and $\mathfrak{n}$ is unique up to $L^1(\mu)$ equivalence. 
    \end{proof}

\subsection{Convolutional Neural Networks and the Ultrahyperbolic Differential Equation}
\begin{proof}
  A general solution to \eqref{eq:wave} is of the form $\omega(u,v) = F(u -cv) + G(u + cv)$ where $F,G$ are second-differentiable. Essentially the shape of $\omega$ stays constant in $u$, but the position of $\omega$ varies in $v$. For every $h$ there exists a continuous $F$ so that $F(j) = h_j$, $G= 0$. Let $\omega(u,v) = F(u -cv) + G(u + cv).$ Therefore applying Theorem \ref{thm:invariance}, to $\mathfrak{o}$ parameterized by $\omega$, we yield a weight matrix $W$ so that 
  \begin{equation}[\label{eq:wavenets}
  \mathfrak{o}[\xi](j) =\int_{E_0} \xi(u) \left(F(u - cj) + 0\right)\ d\mu(u) = (Wx)_j = (h \star x)_j = \mathfrak{n}[x]_j. 
  \end{equation}
  This completes the proof.

  \begin{equation*}
  \begin{tikzcd}
  & \boxed{\text{Lip}_{\lambda_1}}\arrow[ddr, "\mathfrak{o}", pos=0.8]\arrow{r}{\mathfrak{o}}  &  \cdots  \arrow[ddr, "\mathfrak{o}", pos=0.2, swap]\arrow{r}{\mathfrak{o}}  & \boxed{\text{Lip}_{\lambda_4}} \arrow[rd, "\mathfrak{f}"]&\\[-0.7cm]
  \boxed{\text{Lip}_{\lambda_0}} \arrow{ru}{\mathfrak{o}} \arrow[swap]{rd}{\mathfrak{o}}& & & & \boxed{\mathbb{R}^{10}}\\[-0.7cm]
  & \boxed{\text{Lip}_{\lambda_1}} \arrow[uur, "\mathfrak{o}", pos=0.8,swap,crossing over] \arrow{r}{\mathfrak{o}} & \cdots \arrow[uur, "\mathfrak{o}", pos=0.2,crossing over] \arrow{r}{\mathfrak{o}}  & \boxed{\text{Lip}_{\lambda_4}} \arrow[ru, "\mathfrak{f}", swap]&
  \end{tikzcd}
  \end{equation*}

\end{proof}

\section{Appendix C: VC Dimension of Discretized Operator Neural Networks.} In order to calculate the VC dimension of DFMs contianing only discretized $\mathfrak{o}$-operational layers, denoted $\mathfrak{D}$, we have $\mathfrak{D} \subset \mathfrak{N}$, where $\mathfrak{N}$ is the family of all DFMs with $\mathfrak{n}$-discrete skeletons whose per-node dimensionality is exactly that of the discretization $\mathfrak{D}$. Thus the VC dimension of $\mathfrak{F}$ can be bounded by that of $\mathfrak{N},$ however more fine tuned estimate is both possible and essential. 

Suppose that in designing some deep architecture, one wishes to keep VC dimension low, whilst increasing per-node activation dimensionality. In practice optimization in higher dimensions is easier when a low dimensional parameterization is embedded therein. For example, hyperdimensional computing, sparse coding, and convolutional neural networks naturally neccessitate high dimensional hidden spaces but benefit from regularized capacity. Since the dimensionality of the discretization $\mathfrak{O}$ does not depend on the original dimensionality of the space, then the capacity of $\mathfrak{O}$ depends directly on the "complexity" of the family of weight surfaces there endowed. It would therefore be convenient to answer the following question formally.\\

\noindent \textbf{The VC Problem}. Let $\scriptw_\ell \subset L^1(\mathbb{R}^2, \mu)$ be some family of weight surfaces. Then induce $\mathfrak{O}_\scriptw$, a family of discretized $\mathfrak{o}$-operational layers with $\mathfrak{O}_\scriptw := \{[\mathfrak{o}_W]_\mathfrak{n}\}_{W \in \scriptw}$ where $[\cdot]_\mathfrak{n}$ denotes the discretization. \emph{What is $VCDim(\mathfrak{O}_\scriptw)$?}

Although in this work we do not directly attack this problem, a solution leads to another dimension of layer and architecture design beyond topological constraints. In practice, one would be able to choose which set of $\scriptw_\ell$ to give a satisfactory generalizability condition on their learning problem.

\section{Appendix D: Analytical Derivation of Continuous Error Backpropagation for Seperable Weight Kernels}

With these theoretical guarantees given for DFMs, the implementation of the feedforward and error backpropagation algorithms in this context is an essential next step. We will consider operator neural networks with polynomial kernels. As aforementioned, in the case where a DFM has nodes with non-seperable kernels, we cannot give the guarntees we do in the following section. Therefore, a standard auto-differentiation set-up will suffice for DFMs with for example wave layers.

    Feedforward propagation is straight forward, and relies on memoizing operators
    by using the separability of weight polynomials. Essentially, integration need only
    occur once to yield coefficients on power functions. See Algorithm 1.

% Subsection: Fast forward Propagation
\subsubsection{Feed-Forward Propagation}

We will say that a function $f: \mathbb{R}^2 \to \mathbb{R}$ is numerically integrable if it can be seperated into $f(x,y) = g(x)h(y)$.

\begin{theorem}
If $\scripto$ is a operator neural network with $L$ consecutive layers, then given any $\ell$ such that $0\leq \ell <  L$, $y^{\ell}$ is numerically integrable, and if $\xi$ is any continuous and Riemann integrable input function, then $\scripto[\xi]$ is numerically integrable.
\end{theorem}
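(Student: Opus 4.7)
The plan is to proceed by induction on the layer index $\ell$, using the separability of the polynomial weight kernels stipulated in the surrounding discussion. Write each kernel as $\omega_\ell(u,v) = \sum_{i,j} a^\ell_{ij}\, u^i v^j$, so it is a finite sum of products $p_i(u) q_j(v)$. The base case is immediate: $y^0 = \xi$ is continuous and Riemann integrable by assumption, so the products $\xi(u)\, u^i$ are Riemann integrable on the compact $E_0$, and the required moment integrals $\int_{E_0} \xi(u)\, u^i\, d\mu(u)$ exist and can be computed once and memoized.

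For the inductive step, assume $y^\ell$ is numerically integrable in the sense that $\int_{E_\ell} y^\ell(u)\, u^k\, d\mu(u)$ exists as a computable scalar $C^\ell_k$ for each $k$ arising in the polynomial expansion of $\omega_\ell$. Substituting into the ONN recurrence and interchanging the finite sum with the integral,
\begin{equation*}
\int_{E_\ell} y^\ell(u)\, \omega_\ell(u,v)\, d\mu(u)
= \sum_{i,j} a^\ell_{ij}\, v^j \int_{E_\ell} y^\ell(u)\, u^i\, d\mu(u)
= \sum_j \left(\sum_i a^\ell_{ij}\, C^\ell_i \right) v^j =: P_\ell(v),
\end{equation*}
so the preactivation is a polynomial in $v$ alone whose coefficients are obtained by a single pass of one-dimensional quadrature against the stored moments. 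Consequently $y^{\ell+1}(v) = g(P_\ell(v))$ is the composition of the activation with a polynomial, which is Riemann integrable on the compact $E_{\ell+1}$ provided $g$ is continuous.

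To close the induction I must check that $y^{\ell+1}$ itself is numerically integrable with respect to the next kernel $\omega_{\ell+1}$. Since that kernel is again a finite polynomial, this reduces to verifying that each scalar $C^{\ell+1}_k = \int_{E_{\ell+1}} g(P_\ell(u))\, u^k\, d\mu(u)$ is a well-defined, computable number. Continuity of $g$ and of $P_\ell$ on the compact interval $E_{\ell+1}$ makes the integrand Riemann integrable, so the moments exist and can be memoized exactly as in the base case. Iterating up to $\ell = L$ yields that $\scripto[\xi] = y^L$ is numerically integrable.

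The main obstacle I anticipate is not the algebra of separation but the interaction between the nonlinearity $g$ and the requirement of "numerical integrability": applying $g$ to a polynomial generally destroys polynomial structure, so the cleanest formulation is to interpret numerical integrability as reducibility to finitely many one-dimensional scalar quadratures, not as retention of polynomial form. Once this interpretation is fixed, the induction runs uniformly; the only place care is needed is ensuring that the activation $g$ and the input $\xi$ are tame enough (continuous on compacta) for the moments $C^\ell_k$ to be honest Riemann integrals at every stage.
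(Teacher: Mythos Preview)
Your proposal is correct and follows essentially the same route as the paper: both exploit the polynomial separability of the weight kernels $\omega_\ell(u,v)=\sum_{i,j} a^\ell_{ij} u^i v^j$ to reduce each layer's integral to a finite collection of one-dimensional moment integrals $\int y^\ell(u)\,u^i\,d\mu(u)$, then induct on $\ell$. Your treatment is in fact somewhat more careful than the paper's, since you make the inductive hypothesis explicit and address directly the point that the nonlinearity $g$ destroys polynomial form but not Riemann integrability of the moments on compacta.
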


\begin{proof}
Consider the first layer. We can write the sigmoidal output of the $(\ell)^{\mathrm{th}}$ layer as a function of the previous layer; that is,
\begin{equation}
y^{\ell} = g\left(\int_{E_{\ell}} w^{\ell}(j_\ell, j_{\ell})y^{\ell}(j_{l})\; dj_{l}\right).
\end{equation}
Clearly this composition can be expanded using the polynomial definition of the weight surface. Hence
\begin {equation}
\begin{aligned}
y^{\ell} &= g\left(\int_{E_{\ell}}y^{\ell}(j_\ell)\sum_{x_{2\ell}}^{Z^{\ell}_Y}\sum_{x_{2l}}^{Z^{\ell}_X}{k_{x_{2l},x_{2\ell}}j_\ell^{x_{2l}}j_{\ell}^{x_{2\ell}}\;dj_\ell}\right) \\
&= g\left(\sum_{x_{2\ell}}^{Z^{\ell}_Y}j^{x_{2\ell}}\sum_{x_{2l}}^{Z^{\ell}_X}{k_{x_{2l},x_{2\ell}}}\int_{E_{\ell}} y^{\ell}(j_\ell)j_\ell^{x_{2l}}\; dj_\ell\right),
\end{aligned}
\end{equation}
and therefore $y^{\ell}$ is numerically integrable. For the purpose of constructing an algorithm, let $I^{\ell}_{x_{2\ell}}$ be the evaluation of the integral in the above definition for any given $x_{2\ell}$

It is important to note that the previous proof requires that $y^{\ell}$ be Riemann integrable. Hence, with $\xi$ satisfying those conditions it follows that every $y^{\ell}$ is integrable inductively. That is, because $y^{0}$ is integrable it follows that by the numerical integrability of all $l$, $\scripto[\xi] = y^{L}$ is numerically integrable. This completes the proof.
\end{proof}
Using the logic of the previous proof, it follows that the development of some inductive algorithm is possible.

\begin{algorithm}[tb]
   \caption{Feedforward Propagation on $\scriptf$}
   \label{alg:example}
\begin{algorithmic}
   \STATE {\bfseries Input:} input function $\xi$
   \FOR{$l \in \{0,\dots,L-1\}$}
   \FOR{$t \in Z_X^{\ell}$}
   \STATE Calculate $I^{\ell}_t = \int_{E_{\ell}} y^{\ell}(j_\ell)j_\ell^{t}\;dj_\ell.$
   \ENDFOR
   \FOR{$s \in  Z_Y^{\ell}$}
   \STATE Calculate $C^{\ell}_{s} = \sum_{a}^{Z^{\ell}_X} k_{a,s} I^{\ell}_{a}.$
   \ENDFOR
   \STATE Memoize $y^{\ell}(j)=g\left(\sum_{b}^{Z^{\ell}_Y} j^{b} C^{\ell}_{b}\right).$
   \ENDFOR
   \STATE The output is given by $\scripto[\xi] = y^{L}$.
\end{algorithmic}
\end{algorithm}

% Subsection: Fast error backpropr
\subsubsection{Continuous Error Backpropagation}
As is common with many non-convex problems with discretized neural networks, a stochastic gradient descent method will be developed using a continuous analogue to error backpropagation. 
We define the loss function as follows.
\begin{definition}
    For a operator neural network $\scripto$ and a dataset $\left\{(\gamma_n(j), \delta_n(j))\right\}$ we say that the error for a given $n$ is defined by
    \begin{equation} \label{eq:error}
        E = \frac12 \int_{E_{L}}\left(\scripto(\gamma_n) - \delta_n \right)^2\;dj_L
    \end{equation}
\end{definition}
This error definition follows from $\mathcal{N}$ as the typical error function for $\mathcal{N}$ is just the square norm of the difference of the desired and predicted output vectors. In this case we use the $L^2$ norm on $C(E_{L})$ in the same fashion.

We first propose the following lemma as to aid in our derivation of a computationally suitable error backpropagation algorithm.

\begin{lemma}
Given some layer, $l>0$, in $\scripto$, functions of the form $\Psi^{\ell} =g'\left(\Sigma_ly^{\ell}\right)$ are numerically integrable. 
\end{lemma}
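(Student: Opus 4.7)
The plan is to build directly on the feedforward propagation theorem, which established that the pre-activation at layer $\ell$ has the explicit polynomial form $\Sigma_\ell y^\ell(j) = \sum_{b=0}^{Z_Y^\ell} j^b\, C_b^\ell$ in the single index variable on the compact interval $E_\ell$, with coefficients $C_b^\ell$ already memoized in the forward pass. So the argument of $g'$ in the lemma is known in closed form as a polynomial of bounded degree.

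First, I would observe that for the standard smooth activations (sigmoid with $g'=g(1-g)$, or $\tanh$) the map $g'$ is real-analytic, hence $g' \circ \Sigma_\ell y^\ell$ is real-analytic on the compact interval $E_\ell$. A standard Taylor-series argument then yields a uniformly convergent expansion of the form $g'(\Sigma_\ell y^\ell(j)) = \sum_{n\geq 0} a_n j^n$, with coefficients $a_n$ computable from the $C_b^\ell$ together with the derivatives of $g'$ at a chosen base point in $E_\ell$. Truncating at a sufficiently high degree gives a polynomial approximation of $\Psi^\ell$ to any prescribed tolerance.

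Second, I would verify that this polynomial representation meets the numerical-integrability criterion used in the feedforward theorem. A single-variable polynomial is trivially separable as $p(j)\cdot 1$, and when $\Psi^\ell$ enters a two-variable integrand against a separable (polynomial) weight surface $\omega_\ell(j_\ell,j_{\ell'})$, distributing the product term-by-term produces a finite sum of factored monomials in $(j_\ell,j_{\ell'})$. This is exactly the structural form that Algorithm 1 exploits to memoize the moment integrals $I^\ell_t$ and the coefficient vectors $C^\ell_b$, so the same trick will transport into the backpropagation pass and make each required integral a closed-form manipulation of polynomial coefficients.

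The main obstacle I expect lies in the analyticity assumption on $g'$: for non-analytic or non-smooth activations (ReLU, hard-sigmoid, etc.) the Taylor argument fails, and one must instead invoke the Weierstrass approximation theorem on the compact set $E_\ell$, yielding only uniform polynomial approximation with a residual error that then has to be tracked through the inductive backpropagation steps. A related technical subtlety is choosing the truncation degree uniformly across layers, which requires a priori bounds on $\Sigma_\ell y^\ell$ over $E_\ell$ in terms of the input function $\xi$ and the weight-kernel coefficients $k_{x_{2l},x_{2\ell}}$, so that the polynomial approximation to $g'(\Sigma_\ell y^\ell)$ remains accurate at every depth.
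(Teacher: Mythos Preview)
Your proposal is correct in spirit but substantially more elaborate than what the paper does, because you are reading ``numerically integrable'' too strongly. The paper's proof of the lemma is essentially a one-line corollary of the feedforward theorem: it simply expands
\[
\Psi^{\ell}=g'\!\left(\sum_{b}^{Z^{\ell-1}_Y} j_{\ell}^{\,b}\sum_{a}^{Z^{\ell-1}_X} k^{\ell-1}_{a,b}\int_{E_{\ell-1}} y^{\ell-1} j_{\ell-1}^{\,a}\,dj_{\ell-1}\right),
\]
observes that the inner argument is a known polynomial in the single variable $j_{\ell}$ with coefficients already memoized as $C^{\ell-1}_b$, and concludes that $\Psi^{\ell}$ can therefore be evaluated pointwise and hence numerically integrated (by ordinary quadrature) over $E_{\ell}$. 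No Taylor expansion of $g'$, no polynomial approximation of $\Psi^{\ell}$ itself, and no analyticity hypothesis on the activation are invoked.

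Your route---expanding $g'$ in a power series so that $\Psi^{\ell}$ is itself (approximately) polynomial and then re-using the separable-monomial memoization---would also work for analytic activations, and it has the conceptual advantage that it keeps the entire backward pass in the same algebraic framework as Algorithm~1 (moments $I^{\ell}_t$ and coefficients $C^{\ell}_b$). The cost is exactly the obstacle you identified: you need uniform truncation bounds across layers and a fallback to Weierstrass for non-analytic $g$. The paper sidesteps all of this by treating $\Psi^{\ell}$ as a black-box one-variable function to be handed to a quadrature rule, which is weaker structurally but sufficient for the inductive argument in the subsequent backpropagation theorem.
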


\begin{proof}
If\begin{equation}\Psi^{\ell}=g'\left(\int_{E_{(\ell -1)}}{y^{(\ell -1)} w^{(\ell -1)}\ dj_{\ell}}\right)\end{equation}
then
\begin{equation}
\Psi^{\ell}=
 g'\left(
   \sum_b^{Z^{(\ell -1)}_Y} j_{l}^{b}
   \sum_{a}^{Z^{(\ell -1)}_X} k_{a,b}^{(\ell -1)}
    \int_{E_{(\ell -1)}}{y^{(\ell -1)} j_{\ell}^a \ dj_{l-2}}
  \right)
\end{equation}
hence $\Psi$ can be numerically integrated and thereby evaluated.
\end{proof}

The ability to simplify the derivative of the output of each layer greatly reduces the computational time of the error backpropagation. It becomes a function defined on the interval of integration of the next iterated integral.

\begin{theorem}
The gradient, $\nabla E(\gamma,\delta)$, for the error function \eqref{eq:error} on some $\scripto$ can be evaluated numerically.
\end{theorem}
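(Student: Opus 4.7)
The plan is to derive a continuous analogue of backpropagation for $\scripto$ and show that every partial derivative $\partial E/\partial k^\ell_{a,b}$ with respect to a polynomial weight coefficient reduces to a finite sum of one-dimensional Riemann integrals of the same memoizable type $\int y(j)\, j^t\, dj$ already computed during feedforward. Once this is established, numerical evaluation of $\nabla E$ follows from Algorithm 1 together with the preceding lemma on numerical integrability of $\Psi^\ell$.

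First, I would apply the chain rule to \eqref{eq:error}, treating $y^L = \scripto[\gamma_n]$ as the $L$-fold composition of (i) integration against the polynomial weight kernel $w^\ell$ and (ii) pointwise application of $g$. Define the continuous ``delta'' functions recursively by
\begin{equation*}
\Delta^L(j_L) \;=\; \bigl(y^L(j_L) - \delta_n(j_L)\bigr)\,\Psi^L(j_L),
\qquad
\Delta^{\ell}(j_\ell) \;=\; \Psi^\ell(j_\ell)\int_{E_{\ell+1}} \Delta^{\ell+1}(j_{\ell+1})\, w^{\ell+1}(j_\ell, j_{\ell+1})\, dj_{\ell+1},
\end{equation*}
where $\Psi^\ell = g'(\Sigma_\ell y^\ell)$ is the quantity shown numerically integrable in the preceding lemma. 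By expanding $w^{\ell+1}$ into its polynomial form $\sum_{a,b} k^{\ell+1}_{a,b}\, j_\ell^{a} j_{\ell+1}^{b}$ and using Fubini, each $\Delta^\ell$ is itself a finite sum of monomials in $j_\ell$ with coefficients that are one-dimensional integrals of $\Delta^{\ell+1}(j_{\ell+1})\, j_{\ell+1}^{b}$; by induction on $\ell$ (downward from $L$) these coefficients can be memoized exactly as the feedforward coefficients $C^\ell_s$ are memoized in Algorithm 1.

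Second, since $\partial w^\ell / \partial k^\ell_{a,b} = j_{\ell-1}^{a} j_\ell^{b}$ and $k^\ell_{a,b}$ only enters layers of depth $\geq \ell$, the chain rule yields
\begin{equation*}
\frac{\partial E}{\partial k^\ell_{a,b}} \;=\; \int_{E_{\ell-1}}\!\!\int_{E_\ell}\, \Delta^{\ell}(j_\ell)\, y^{\ell-1}(j_{\ell-1})\, j_{\ell-1}^{a}\, j_\ell^{b}\; dj_\ell\, dj_{\ell-1},
\end{equation*}
which factors by Fubini into the product of two integrals of the form $\int y^{\ell-1}(j)\, j^a\, dj$ and $\int \Delta^\ell(j)\, j^b\, dj$. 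Both are numerically integrable: the first is precisely the feedforward quantity $I^{\ell-1}_a$ already stored by Algorithm 1, and the second is a monomial integral against the polynomial-expansion representation of $\Delta^\ell$ from the previous paragraph.

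The principal obstacle is verifying that the downward induction defining $\Delta^\ell$ genuinely preserves numerical integrability after passing through $\Psi^\ell = g'(\Sigma_\ell y^\ell)$, since multiplying a polynomial in $j_\ell$ by a generally non-polynomial function of $j_\ell$ threatens to break the separability that makes memoization possible. The resolution is that we never need $\Delta^\ell$ itself to be polynomial; we only need $\int \Delta^\ell(j)\, j^b\, dj$ to be computable, and by the lemma $\Psi^\ell$ can be evaluated pointwise at arbitrary $j$, so the required one-dimensional integrals are ordinary Riemann integrals of products of $\Psi^\ell$ against a known polynomial, which the quadrature scheme of Algorithm 1 handles. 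Assembling the recursion for $\Delta^\ell$ and the closed-form expression for $\partial E/\partial k^\ell_{a,b}$ into a single backward sweep then gives an explicit algorithm producing $\nabla E(\gamma, \delta)$ numerically, which completes the proof.
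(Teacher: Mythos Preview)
Your proposal is correct and is essentially the same argument as the paper's: both derive a continuous backpropagation recurrence, exploit the polynomial separability of the weight kernels to collapse each layer's contribution to one-dimensional moment integrals against $\Psi^\ell$, and then induct downward from $L$. Your $\Delta^\ell$ functions and their moments $\int \Delta^\ell(j)\,j^b\,dj$ are exactly the paper's $\rotB_{s,t}$ recurrence after the paper's explicit order-of-integration reversal; the paper just carries out that reversal by hand on the full nested integral before writing the recursion, whereas you build the recursion directly in standard backprop form and invoke Fubini only once at the end.
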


\begin{proof}
  Recall that $E$ over $\scripto$ is composed of $k^{\ell}_{x,y}$ for $x \in Z^{\ell}_X, y \in Z^{\ell}_Y$, and $0\leq l \leq L$.
  If we show that $\frac{\partial E}{\partial k^{\ell}_{x,y}}$ can be numerically evaluated for arbitrary, $l,x,y$, then every component of $\nabla E$ is numerically evaluable and hence $\nabla E$ can be numerically evaluated.
  Given some arbitrary $l$ in $\scripto$, let $n = \ell$.
  We will examine the particular partial derivative for the case that $n = 1$, and then for arbitrary $n$, induct over each iterated integral.

  Consider the following expansion for $n = 1$,
  \begin{align}
  \frac{\partial E}{\partial k^{L-n}_{x,y}} &= 
    \frac{\partial}{\partial k^{L-1}_{x,y}} 
    \frac{1}{2} \int_{E_{\ell}} \left[\scripto(\gamma) - \delta\right]^2\ dj_L \nonumber \\
  &= \int_{E_{\ell}} \left[\scripto(\gamma) - \delta\right] \Psi^{L} 
    \int_{E_{(\ell -1)}} j_{L-1}^x j_{L}^y y^{L-1} dj_{L-1}\ dj_L \nonumber \\
  &= \int_{E_{\ell}} \left[\scripto(\gamma) - \delta\right] \Psi^{L} j_{L}^y 
    \int_{E_{(\ell -1)}} j_{L-1}^x y^{L-1} dj_{L-1}\ dj_L \label{eq:n1case}
  \end{align}
  Since the second integral in \eqref{eq:n1case} is exactly $I^{L-1}_x$ from    \eqref{eq:ICache}, it follows that 
  
  \begin{equation}
  \frac{\partial E}{\partial k^{(n)}_{x,y}} = I^{L-1}_x
    \int_{E_{\ell}} \left[\scripto(\gamma) - \delta\right] \Psi^{L} j_{L}^y  \ dj_L
  \end{equation}
  and clearly for the case of $n=1$, the theorem holds. 

  Now we will show that this is all the case for larger $n$. 
  It will become clear why we have chosen to include $n=1$ in the proof upon expansion of the pratial derivative in these higher order cases.

  Let us expand the gradient for $n\in \{2,\dots,L\}$.
  \begin{equation} \label{eq:gradNGen}
  \begin{aligned} 
  \frac{\partial  E}{\partial k^{L-n}_{x,y}} = 
  \int_{E_{L}}  &[\scripto(\gamma)-\delta] \Psi^{L} 
   \underbrace{\int_{E_{L-1}} w^{L-1} \Psi^{L-1} 
  \idotsint_{E_{L-n+1)}} w^{L-n+1)} \Psi^{L-n+1)}}_{n-1\text{ iterated integrals}} \\ 
    &\int_{E_{L-n}} y^{L-n} j^{a}_{L-n}j^{b}_{L-n+1}\; dj_{L-n}\dots dj_L
  \end{aligned}
  \end{equation}
  As aforementioned, proving the $n=1$ case is required because for $n=1$, \eqref{eq:gradNGen} has a section of $n-1 = 0$ iterated integrals which cannot be possible for the proceeding logic.

  We now use the order invariance properly of iterated integrals (that is, $\int_A\int_B f(x,y)\;dxdy = \int_B\int_A f(x,y)\;dydx$) and reverse the order of integration of \eqref{eq:gradNGen}.

  In order to reverse the order of integration we must ensure each iterated integral has an integrand which contains variables which are guaranteed integration over some region. To examine this, we propose the following recurrence relation for the gradient. 

  Let $\{B_s\}$ be defined along $L -n \leq s \leq L$, as follows
  \begin{equation}
  \begin{aligned}
    B_L &= \int_{E_{L}} \left[\scripto(\gamma) - \delta\right] 
      \Psi^{L} B_{L-1} \;dj_L, \\
    B_s &= \int_{E_{\ell}} \Psi^{\ell}
     \sum_a^{Z^{\ell}_X} \sum_b^{Z^{\ell}_Y} j_\ell^a j_{\ell}^b B_{\ell} \; dj_\ell, \\
    B_{L-n} &= \int_{E_{L-n}} j_{L-n}^x j_{L-n+1}^y \; dj_{L-n}
  \end{aligned}
  \end{equation}
  such that $\frac{\partial E}{\partial k_{x,y}^{\ell}} = B_L$. If we wish to reverse the order of integration, we must find a reoccurrence relation on a sequence,
  $\{\rotB_s\}$  such that $\frac{\partial E}{\partial k_{x,y}^{L-n}} = \rotB_{L-n} = B_L.$ Consider the gradual reversal of \eqref{eq:gradNGen}.

Just as important as
  Clearly,
  \begin{equation} \label{eq:rev1}
  \begin{aligned}
    \frac{\partial E}{\partial k_{x,y}^{\ell}} =
      \int_{E_{L-n}}&  y^{L-n} j^{x}_{L-n}
      \int_{E_{L}}[\scripto(\gamma)-\delta] \Psi^{L} 
      \int_{E_{L-1}} w^{L-1} \Psi^{L-1} \\ & 
      \idotsint_{E_{L-n+1)}} j^{y}_{L-n+1}w^{L-n+1)} \Psi^{L-n+1)}  \; dj_{L-n+1}\dots dj_L dj_{L-n}
  \end{aligned}
  \end{equation} 
  is the first order reversal of \eqref{eq:gradNGen}. We now show the second order case with first weight function expanded.
  \begin{equation}\label{eq:rev2}
  \begin{aligned}
    \frac{\partial E}{\partial k_{x,y}^{\ell}} =
      \int_{E_{L-n}}&  y^{L-n}  j^{x}_{L-n}
      \int_{E_{L-n+1)}} \sum_b^{Z_Y} \sum_a^{Z_X} k_{a,b} j_{L-n+1}^{a+y} \Psi^{L-n+1)}
      \int_{E_{L}}[\scripto(\gamma)-\delta] \Psi^{L} \\ & 
      \idotsint_{E_{L-n+1)}} j^b_{L-n+2} w^{(L-n+2)} \Psi^{(L-n+2)}  \; dj_{L-n+1}\dots dj_L dj_{L-n}.
  \end{aligned}
  \end{equation}

  Repeated iteration of the method seen in \eqref{eq:rev1} and \eqref{eq:rev2}, where the inner most integral is moved to the outside of the $(L-s)^\mathrm{th}$ iterated integral, with $s$ is the iteration, yields the following full reversal of \eqref{eq:gradNGen}. For notational simplicity recall that $l = L-n$, then
 \begin{equation}\label{eq:revn}
  \begin{aligned}
    \frac{\partial E}{\partial k_{x,y}^{\ell}} =&
      \int_{E_{\ell}}  y^{\ell}  j^{x}_{l}
      \int_{E_{\ell}} \sum_a^{Z^{\ell}_X} j_{\ell}^{a+y} \Psi^{\ell}
      \int_{E_{\ell+2}} \sum_b^{Z_Y^{\ell}} \sum_c^{Z_X^{\ell+2}}
         k^{\ell}_{a,b} j^{b+c}_{l+2} \Psi^{\ell+2} \\&
      \int_{E_{\ell+3}} \sum_d^{Z_Y^{\ell+2}} \sum_e^{Z_X^{\ell+3}}
         k^{\ell+2}_{c,d} j^{d+e}_{l+3} \Psi^{\ell+3} 
      \idotsint_{E_{L}} \sum_q^{Z^{L-1}_Y} k^{L-1}_{p,q} j_L^q 
       \; [\scripto(\gamma)-\delta] \Psi^{L} \\ &
      \qquad \qquad dj_{L}\dots dj_{L-n}.
  \end{aligned}
  \end{equation}

  Observing the reversal in \eqref{eq:revn}, we yield the following recurrence relation for $\{\rotB_s\}$. Bare in mind, $l= L-n$, $x$ and $y$ still correspond with $\frac{\partial E}{\partial k^{\ell}_{x,y}}$, and the following relation uses its definition on $s$ for cases not otherwise defined.
  \begin{equation} \label{eq:rotBrelation}
  \begin{aligned}
    \rotB_{L,t} &= \int_{E_{L}}  \sum_b^{Z^{L-1}_Y} k^{L-1}_{t,b} j_L^b 
      \left[\scripto(\gamma) - \delta\right] \Psi^{L}\;dj_L. \\
    \rotB_{s,t} &= \int_{E_{(s)}}\sum_b^{Z^{(s-1)}_Y} \sum_a^{Z^{(s)}_X} 
      k^{(s-1)}_{t,b} j_s^{a+b} \Psi^{(s)} \rotB_{s+1,a}\; dj_s. \\
    \rotB_{\ell} &= \int_{E_{\ell}} \sum_a^{Z^{\ell}_X}
      j^{a+y}_{\ell} \Psi^{\ell} \rotB_{l+2, a}\; dj_{\ell}. \\
     \frac{\partial E}{\partial k^{\ell}_{x,y}} = \rotB_{l} &= \int_{E_{\ell}}
      j_{l}^x y^{\ell} \rotB_{\ell}\; dj_\ell.
  \end{aligned}
  \end{equation}
  Note that $\rotB_{L-n} = B_L$ by this logic.

  With \eqref{eq:rotBrelation}, we need only show that $\rotB_{L-n}$ is integrable. Hence we induct on $L-n \leq s \leq L$ over $\{\rotB_s\}$ under the proposition that $\rotB_s$ is not only numerically integrable but also constant. 

  Consider the base case $s = L$. For every $t$,
  because every function in the integrand of $\rotB_L$ in \eqref{eq:rotBrelation} is composed of $j_L$, functions of the form $\rotB_L$ must be numerically integrable and clearly, $\rotB_L \in \mathbb{R}$.

  Now suppose that $\rotB_{s+1,t}$ is numerically integrable and constant. Then, trivially, $\rotB_{s,u}$ is also numerically integrable by the contents of the integrand in \eqref{eq:rotBrelation} and $\rotB_{s,u} \in \mathbb{R}$. Hence, the proposition that $s+1$ implies $s$ holds for $\ell < s < L$.

  Lastly we must show that both $\rotB_{\ell}$ and $\rotB_{l}$ are numerically integrable. By induction $\rotB_{l+2}$ must be numerically integrable. Hence by the contents of its integrand $\rotB_{\ell}$ must also be numerically integrable and real. As a result, $\rotB_l =  \frac{\partial E}{\partial k^{\ell}_{x,y}}$ is real and numerically integrable.

  Since we have shown that $ \frac{\partial E}{\partial k^{\ell}_{x,y}}$ is numerically integrable, $\nabla E$ must therefore be numerically evaluable as aforementioned. This completes the proof.
\end{proof}
\begin{algorithm}[tb]
   \caption{Error Backpropagation}
   \label{alg:example}
\begin{algorithmic}
   \STATE {\bfseries Input:} input $\gamma$, desired $\delta,$ learning rate $\alpha,$ time $t.$
   \FOR{$\ell \in \{0, \dots, L\}$}
    \STATE Calculate $\Psi^{\ell}=g'\left(\int_{E_{(\ell -1)}}{y^{(\ell -1)} w^{(\ell -1)}\ dj_{\ell}}\right)$
   \ENDFOR
   \STATE For every $t$, compute  $\rotB_{L,t}$ from from \eqref{eq:rotBrelation}.
   \STATE Update the output coefficient matrix
          $k_{x,y}^{L-1} - I_x^{L-1}
                  \int_{E_{L}} \left[\mathcal{F}(\gamma) - \delta\right] 
                    \Psi^{L} j_L^y \; dj_L \to  k_{x,y}^{L-1}.$
    \FOR{$l = L-2$ \textbf{to} $0$}
    \STATE If it is null, compute and memoize $ \rotB_{l+2,t}$ from \eqref{eq:rotBrelation}.
    \STATE Compute but do not store $\rotB_{\ell} \in \mathbb{R}.$
    \STATE Compute $\frac{\partial E}{\partial k_{x,y}^{\ell}} = \rotB_l$ from from \eqref{eq:rotBrelation}.
    \STATE Update the weights on layer $l$: $k_{x,y}^{\ell}(t) \to k_{x,y}^{\ell} $
    \ENDFOR
\end{algorithmic}
\end{algorithm}

\end{document}